\documentclass[11pt]{article}
\usepackage{fullpage}
\usepackage{amsmath,amsthm,amsfonts,amssymb,dsfont}
\usepackage{microtype}
\usepackage{hyperref,cleveref}

\Crefformat{equation}{#2(#1)#3}
\crefformat{equation}{#2(#1)#3}
\Crefrangeformat{equation}{#3(#1)#4--#5(#2)#6}
\crefrangeformat{equation}{#3(#1)#4--#5(#2)#6}
\Crefmultiformat{equation}{#2(#1)#3}{ and~#2(#1)#3}{, #2(#1)#3}{ and~#2(#1)#3}
\crefmultiformat{equation}{#2(#1)#3}{ and~#2(#1)#3}{, #2(#1)#3}{ and~#2(#1)#3}

\usepackage{enumitem}
\setlist[description]{font=\normalfont}

\usepackage{csquotes}
\MakeOuterQuote{"}

\usepackage{mathtools}
\mathtoolsset{centercolon}
\DeclarePairedDelimiterXPP\ind[1]{\mathds{1}}{\lbrace}{\rbrace}{}{#1} %
\DeclarePairedDelimiter\ip{\langle}{\rangle} %
\DeclarePairedDelimiter\abs{\lvert}{\rvert} %
\DeclarePairedDelimiter\card{\lvert}{\rvert} %
\DeclarePairedDelimiter\norm{\lVert}{\rVert} %
\DeclarePairedDelimiter\ceil{\lceil}{\rceil} %
\DeclarePairedDelimiter\floor{\lfloor}{\rfloor} %
\DeclarePairedDelimiter\del{\lparen}{\rparen} %
\DeclarePairedDelimiter\sbr{\lbrack}{\rbrack} %
\DeclarePairedDelimiter\set{\lbrace}{\rbrace} %
\DeclarePairedDelimiter\intoo{\lparen}{\rparen} %
\DeclarePairedDelimiter\intcc{\lbrack}{\rbrack} %

\providecommand\given{}
\newcommand{\pipeseparator}{\nonscript\:\delimsize\vert\nonscript\:\mathopen{}}
\begingroup
  \catcode`\|=13
  \gdef|{\pipeseparator}
\endgroup
\newcommand{\activatepipe}{%
  \renewcommand\given\pipeseparator
  \mathcode`\|="8000
}

\DeclarePairedDelimiterX{\Set}[1]{\{}{\}}{%
  \activatepipe
  #1
}

\DeclareMathOperator\sign{sign} %
\DeclareMathOperator\Span{span} %
\DeclareMathOperator\Conv{conv} %
\DeclareMathOperator\argmax{\arg\max} %

\newcommand\T{{\scriptscriptstyle{\mathsf{T}}}} %
\newcommand\R{\mathbb{R}} %
\newcommand\N{\mathbb{N}} %
\newcommand\Z{\mathbb{Z}} %

\newcommand\calB{\mathcal{B}}

\DeclareMathOperator\Att{Att}
\DeclareMathOperator\Argmax{Arg\max}
\DeclareMathOperator\poly{poly}

\usepackage[numbers,sort]{natbib}

\usepackage{thmtools}
\usepackage{thm-restate}

\declaretheorem[name=Theorem]{theorem}
\declaretheorem[name=Proposition]{proposition}
\declaretheorem[name=Lemma]{lemma}

\declaretheorem[name=Corollary]{corollary}
\crefname{claim}{claim}{claims}

\newcommand{\iu}{\mathrm{i}}

\title{Attention-based representations for multi-task computation}
\author{Daniel Hsu\thanks{%
    \texttt{djhsu@cs.columbia.edu}.
    Supported in part by the National Science Foundation under grant DMS-2502259, the Office of Naval Research under grant N00014-24-1-2700, and a Research Award from the Columbia Center of AI Technology in collaboration with Amazon.%
  } \\
  {\sl Columbia University}
  \and
  Mingyue Xu\thanks{%
    \texttt{xu1864@purdue.edu}.%
  } \\
  {\sl Purdue University}
}

\begin{document}

\maketitle

\begin{abstract}
  Multi-head attention layers produce vector representations that support multiple downstream tasks.
  We establish bounds on the number of heads required in two simple and concrete multi-task scenarios.
  In the first scenario, a vector representation is sought so that linear predictors can compute both the smallest and largest numbers in a given list.
  In this case, it is known two attention heads with small embedding dimension and bit precision level suffice.
  We prove that a single attention head requires exponentially higher embedding dimension or precision level.
  In the second scenario, a vector representation is sought so that a polynomial threshold function can compute the XOR of a given string of $n$ bits.
  This scenario is analogous to the first one for $n=2$, since XOR is readily computed by a linear function using a vector representation that encodes both the AND and the OR of the two bits.
  We observe that $n$-bit XOR requires the product of the number of heads and the polynomial degree to be at least $n$, and we construct multi-head attention layers that match this lower bound.
  These results generalize to arbitrary (symmetric) Boolean functions, where the bound is given in terms of the threshold degree.
\end{abstract}

\section{Introduction}

Attention heads---which are the building blocks of transformers~\citep{vaswani2017attentionneed}---have been successfully used to learn "task-independent" representations in a variety of scenarios~\citep[e.g.,][]{lin2017structuredselfattentivesentenceembedding,parikh2016decomposableattentionmodelnatural}.
In transformers, layers of several attention heads are composed together to form these representations, and the use of \emph{multiple layers} seems to be necessary for several natural computational tasks related to multi-step reasoning~\citep{merrill2023parallelismtradeofflimitationslogprecision,chiang2023tighterboundsexpressivitytransformer,liu2023transformerslearnshortcutsautomata,sanford2024transformersparallelcomputationlogarithmic,chen2024theoreticallimitationsmultilayertransformer,merrill2025littledepthgoeslong}.

What is less clearly established in the literature is the necessity of \emph{multiple attention heads in a single layer}.
(Recent exceptions are discussed in \Cref{sec:related}.)
Intuitively speaking, using multiple attention heads allows for multiple computations to be carried out in parallel, and hence produce representations that are useful for multiple tasks.
However, the necessity of using multiple heads to achieve this is not a foregone conclusion.
Indeed, a single attention head---with output elements post-processed by a neural network---can approximate essentially any function arbitrarily well, as long as the attention head's embedding dimension or bit precision level is high enough~\citep[e.g.,][]{kajitsuka2024transformerslayerselfattentionusing}.
So the necessity of multiple heads must arise from restrictions on aspects such as dimension, precision level, and the post-processing.

This article considers the limits of attention heads for producing vector representations that support multiple tasks in two scenarios.
In the first scenario, the two tasks are computing the minimum and the maximum of a given list of integers.
Separately, each task is easily and succinctly supported by a single attention head, post-processed by a linear classifier.
We show, in \Cref{thm:minmax}, that a single attention head cannot support both tasks simultaneously in this way unless the value embedding dimension or precision level is large as a function of the input or problem domain size.

In the second scenario, the overall task is to compute the exclusive-or (XOR) of a given string of $n$ bits.
For $n=2$ bits, the XOR is readily determined as a linear function of two other binary features: the AND of the bits and the OR of the bits.
Separately, each of AND and OR is easily and succinctly handled by a single attention head, post-processed by a linear classifier; in this sense, the $2$-bit XOR scenario is similar to minimum and maximum computation.
\citet{viswanathan2026xor} recently observed that no single attention head post-processed by a linear classifier can compute $2$-bit XOR.
We generalize this observation to $n$-bit XOR: any attention layer post-processed by a polynomial threshold function that computes $n$-bit XOR must satisfy
\begin{equation*}
  \text{polynomial degree} \times \text{number of attention heads} \geq n .
\end{equation*}
We also show that the bound is tight: for any positive integers $D$ and $H$ satisfying $D \times H \geq n$, $n$-bit XOR can be computed by an $H$-head attention layer (with value embedding dimension at most $O(\min\set{D,\log H})$), post-processed by a degree-$D$ polynomial threshold function.
Our results for $n$-bit XOR generalize to arbitrary (symmetric) Boolean functions with threshold degree $T$; we show that $D \times H \geq T$ is both necessary and sufficient.

\subsection{Related works}
\label{sec:related}

Many recent theoretical results about transformers emphasize the importance of multiple attention layers for certain tasks~\citep[e.g.,][]{merrill2023parallelismtradeofflimitationslogprecision,chiang2023tighterboundsexpressivitytransformer,liu2023transformerslearnshortcutsautomata,sanford2023representationalstrengthslimitationstransformers,peng2024limitationstransformerarchitecture,sanford2024transformersparallelcomputationlogarithmic,chen2024theoreticallimitationsmultilayertransformer,kozachinskiy2024lowerboundstransformersinfinite,sanford2024onelayertransformersfailsolve,merrill2025littledepthgoeslong,kozachinskiy2026paritysensitivitytransformers}.
Our focus is the importance of multiple heads in a single layer.

A few recent works address the role of multiple attention heads in a single layer.
\citet{yu2026effectattentionheadcount} proves separations between $H$ and $H+1$ attention heads, for all $H\geq1$; here we only compare to the specialization of their result to $H=1$ heads.
Their lower bounds are established for a family of tasks of approximately computing functions similar to $(x_1,\dotsc,x_n) \mapsto \min \set{x_1,\dotsc,x_n} + \max \set{x_1,\dotsc,x_n}$ over a bounded real domain; these tasks are similar in spirit to the problem we study.
They show that when the query, key, and value embeddings have uniformly bounded norms, and the embedding dimension is much smaller than $n$, then by a pigeonhole principle argument, there are at least two inputs (with different correct outputs) that are mapped to similar vectors by the attention head, and such vectors are not well-distinguished by Lipschitz neural networks.
Our result on simultaneous $\min$ and $\max$ computation is not directly comparable, in that
(i) we consider exact computation of $\min$ and $\max$ over a finite domain (as opposed to approximate computation of real-valued functions);
(ii) we only consider a single attention head (as opposed to separations for larger numbers of heads);
(iii) we only consider post-processing by linear classifiers (as opposed to more general neural networks); and
(iv) we put precision limits on the value embeddings.
However, in our result, (i) we do not require bounds on the classifiers' weight vectors or on the key and query vectors, (ii) the classifiers are not required to be Lipschitz, and (iii) the norm bound on the value embeddings can be exponential in the embedding dimension without changing the conclusion of the lower bound.
Our proof focuses on a geometric obstruction that arises from simultaneous $\min$ and $\max$ computation, whereas the proof of \citeauthor{yu2026effectattentionheadcount} uses analytic limitations of neural networks.

\citet{tesfaye2026two} and \citet{viswanathan2026xor} both give separations between one and two attention heads, post-processed by linear classifiers, using problems that are very different from our simultaneous $\min$ and $\max$ problem.
\citet{tesfaye2026two} prove a one-versus-two head separation using a problem that they call Endpoint Selection Problem (ESP).
In ESP, the input is a four-tuple $(u,v,i,\#)$, where $u$ and $v$ come from a finite set $V$, $i$ comes from another set $\set{1,2}$ (disjoint from $V$), and $\#$ is a fixed query token (like \texttt{[CLS]}, per the convention from BERT~\citep{devlin2019bertpretrainingdeepbidirectional}), with $\# \notin V \cup \set{1,2}$.
The correct output is $u$ if $i=1$ and is $v$ if $i=2$.
The challenge is that an attention head is required to provide this output at the position corresponding to the query token $\#$.
\citeauthor{tesfaye2026two} show that a single attention head post-processed by a linear classifier cannot solve ESP, but two attention heads can do so with constant embedding dimension and $\log\card{V}$ precision.
As the authors note, the impossibility result does not hold if the output can be taken from the position of the $i$ token.
So the difficulty arises from a specific requirement of how an attention head is to be used.

\citet{viswanathan2026xor} studies the two-bit XOR problem, where the input is $(x_1,x_2) \in \set{0,1}^2$ and the output is $x_1 \oplus x_2$.
It is shown that for any single attention head, the vector produced on inputs $(0,1)$ and $(1,0)$ is on the line segment joining the vectors produced on inputs $(0,0)$ and $(1,1)$.
Therefore, a linear classifier cannot separate the $(0,1)$ and $(1,0)$ cases from the $(0,0)$ and $(1,1)$ cases.
Notably, this simple geometric argument does not depend on the dimension or the precision level of the embeddings.
\citeauthor{viswanathan2026xor} also shows that using two attention heads readily solves the problem: essentially, one head implements an AND gate and the other head implements an OR gate.
We generalize their observations to $n$-bit XOR, and consider attention layers with multiple heads and post-processing by general polynomial threshold functions.

Some prior works establish the inability of self-attention layers or (variants of) transformers to compute $n$-bit XOR.
\citet{hahn2020theoretical} and \citet{hao2022formallanguagerecognitionhard} study a variant of constant-size transformers that use "unique hard attention" instead of the standard softmax attention, and show that they cannot compute $n$-bit XOR.
\citet{kozachinskiy2026paritysensitivitytransformers} shows that every multi-head attention layer post-processed by a fixed-size ReLU network has average sensitivity $\tilde O\del{\sqrt{n}}$, and hence cannot compute functions of higher average sensitivity such as $n$-bit XOR.
These results do not apply to attention layers that are post-processed by functions of size possibly growing with $n$, or to attention layers with a number of heads that may grow with $n$.

\citet{adler2026capacitybasedrationalemultiheadattention} studies the role of multiple attention heads from the perspective of memorization capacity.
They find that when query/key vectors across all heads in an attention layer share a fixed (dimension) budget, the number of directed relationships between tokens that can be memorized is higher for layers with many heads than for layers with one head (or few heads, empirically).
Using multiple heads reduces the interference from superposition in the embedding space.
Their analysis focuses solely on the expressiveness of attention scores; in particular, they do not study the effect of value vectors.
Our results are complementary: we focus on natural computational tasks supported by attention layers, and our analysis hinges on geometric constraints due to the way value vectors are combined.
Our lower bounds put no restrictions on the query/key vectors whatsoever.

\subsection{Proof techniques}

Our lower bound for the minimum/maximum scenario is based on two key ingredients.
The first ingredient is the Erd\H{o}s--Szekeres theorem~\citep{erdos1935combinatorial,steele1995variations}, which states that every sequence of distinct real numbers of length $T+1$ must either contain a subsequence of length $\geq \sqrt{T}+1$ that is either increasing or decreasing.
In our setting, if the attention weights assigned to some collection of numbers are increasing, then the smallest number in this collection will get relatively low attention weight compared to larger numbers.
Hence, in order for a linear classifier to compute the minimum, the value embeddings associated with the smallest number in this collection should "stand out" among the value embeddings for larger numbers (e.g., by being orthogonal to the other value embeddings).
This line of reasoning leads to a system of constraints for a large number of the value embeddings.
Similarly, if attention weights are decreasing, then we get analogous constraints on the value embeddings in order for a linear classifier to compute the maximum.
The Erd\H{o}s--Szekeres theorem guarantees that, for every choice of attention weights, either the minimum or the maximum task implies constraints on a large number of the value embeddings.

The second ingredient is a translation of the constraints on the value embeddings into lower bounds on the dimensionality or precision level of the value embeddings.
Although the value embeddings are not required to be orthogonal, they must satisfy an "irrepresentable condition" similar to that of \citet{zhao2006modelselectionconsistencylasso}.
We use a volume argument, similar to that of \citet{awerbuch2008online} in their analysis of barycentric spanners, to show that any collection of vectors from an integer lattice satisfying the constraints must either have large dimension or have exponentially large norm.
The lattice and norm constraint is easily translated to a constraint on the bit precision level of the value embeddings.

Our lower bound for $n$-bit XOR is a simple extension of the observations by~\citet{viswanathan2026xor} and \citet{kozachinskiy2026paritysensitivitytransformers}.
Specifically, we use a rational function representation of attention heads over the finite token space $\set{0,1}$---a representation that was also used by \citet{kozachinskiy2026paritysensitivitytransformers}---and then appeal to a known results about the threshold degree of $n$-bit XOR.
The argument works to provide a lower bound for any Boolean function in terms of the threshold degree.

The matching upper bounds for $n$-bit XOR (and, in fact, any symmetric Boolean function) are also based on rational function representations achievable by multi-head attention layers.
Each attention head outputs a vector scaled by the reciprocal of an affine function of the input string's weight (i.e., the number of $1$'s in the input), a standard technique used in previous attention head constructions for parity~\citep[e.g.,][]{chiang2023tighterboundsexpressivitytransformer}.
An elementary argument based on factorization then gives a construction where the dimension of the value embeddings is equal to the degree of the post-processing polynomial threshold function.

To reduce the value embedding dimension, our approach reduces to the following question about sign-representing the parity function:
What is (a bound on) the smallest $d$, such that for all positive integers $D$ and $H$, there are univariate polynomials $P_1,\dotsc,P_d$, all of degree at most $H$, and a $d$-variate polynomial $g$ of degree at most $D$, such that $\sign(g(P_1(t),\dotsc,P_d(t))) = (-1)^t$ for all $t \in \set{0,1,\dotsc,n}$?
(Here, $H$ corresponds to the number of attention heads, and $D$ is the degree of the polynomial threshold function.)
We obtain a bound by constructing small additive bases in the sense of~\citet{mossige1987extremal,challis2010some} and others (a.k.a.~postage stamp bases), but with a restriction on the basis elements.
Specifically, for a given pair of positive integers $(D,H)$, we require an additive $D$-basis $\calB$ with range $DH$, with the additional restriction that all elements of $\calB$ come from $[H]$.
The dimension we achieve this way is equal to the cardinality of such an additive basis (plus one).
To see this, note that, by definition, the additive $D$-basis $\calB$ grants a representation of every $\ell \in [DH]$ as the sum of at most $D$ (not necessarily distinct) elements of $\calB$.
Letting $P_i(t) = t^{\beta_i}$ for each $\beta_i \in \calB$, any monomial $t^\ell$ for $\ell \in [DH]$ is equal to the evaluation of some monomial $z_1^{c_1} z_2^{c_2} \dotsm$ of total degree $c_1 + c_2 + \dotsb \leq D$ at $(z_1,z_2,\dotsc) = (P_1(t),P_2(t),\dotsc)$.
We show the existence of the required additive $D$-bases of size $\leq 2p+1$ whenever $D \gtrsim pH^{1/p}$ for any $p\geq1$.
Combining this with the simpler approach (where the dimension is at most $D$), it follows that dimension $O(\min\set{D,\log H})$ can always be achieved.

\section{Preliminaries}
\label{sec:prelims}

In this \namecref{sec:prelims}, we define the basic notions used in our results and proofs.

\paragraph{Attention.}

We consider the standard softmax attention mechanism with only a single query vector $q$, which can be regarded as corresponding to a special \texttt{[CLS]} token (per the convention from BERT~\citep{devlin2019bertpretrainingdeepbidirectional}) presented alongside the actual input $(x_1,\dotsc,x_n)$.
The \emph{attention operator} $\Att \colon (\R \times \R^d)^n \to \R^d$ is defined as follows:
\begin{equation*}
  \Att (\ip{q, k_j},v_j)_{j=1}^n
  :=
  \frac{
    \sum_{j=1}^n
    \exp(\ip{q, k_j})
    v_j
  }{
    \sum_{j=1}^n
    \exp(\ip{q, k_j})
  }
  .
\end{equation*}
Here, $(k_j,v_j)_{j=1}^n$ are the $n$ pairs of key and value vectors corresponding to the $n$ input elements, and $d$ is the dimension of the value vectors.
Because we are only concerned with a single query vector, which appears only through an inner product with key vectors, the dimension of the query and key vectors is irrelevant, although all of our constructions can be realized with one-dimensional query and key vectors.
(In fact, each $\exp(\ip{q,k_j})$ can be replaced by any positive number $\alpha_j > 0$; we use the standard exponential and inner product form for simplicity and conformity.)

An \emph{attention head} with value vector dimension $d$ for an input token domain $\Omega$ is a mapping from $\Omega^n \to \R^d$ parameterized by $(\ip{q,k_\omega}, v_\omega) \in \R \times \R^d$ for all $\omega \in \Omega$, and computes $\Att (\ip{q,k_{x_j}},v_{x_j})_{j=1}^n$ on every input $(x_1,\dotsc,x_n) \in \Omega^n$.
An \emph{$H$-head attention layer} is a collection of $H$ attention heads (with the same value vector dimension $d$) that computes the sum of outputs of all $H$ attention heads on a given input.
We will use either $\Omega = [M] := \set{1,\dotsc,M}$ for a positive integer $M$, or $\Omega = \set{0,1}$.

\paragraph{Linear classifiers.}

An \emph{unambiguous $M$-class linear classifier} $f \colon \R^d \to [M] \cup \set{\bot}$ is parameterized by $M$ weight vectors $\theta_1,\dotsc,\theta_M \in \R^d$; on input $z \in \R^d$, it returns $\Argmax_{i \in [M]} \ip{\theta_i, z}$, where $\Argmax$ returns $\bot$ if there is a tie for the $\argmax$, and otherwise it returns the unique $\argmax$.
When inputs come from a finite domain, the $\bot$ value is avoided by minimally perturbing the weight vectors.

\paragraph{Polynomials and sign-representations.}

Throughout this paper, only polynomials with real-valued coefficients are considered.
The sole exceptions are $\R^d$-valued polynomials in \Cref{sec:ptf_lb} (so the coefficients are real $d$-vectors), which can be regarded as $d$ separate real-valued polynomials.

An $n$-variate polynomial $p$ \emph{sign-represents} a Boolean function $f \colon \set{0,1}^n \to \set{-1,1}$ if $f(x) = \sign(p(x))$ for all $x \in \set{0,1}^n$.
The \emph{threshold degree} of a Boolean function $f \colon \set{0,1}^n \to \set{-1,1}$ is the minimum degree $D$ such that there exists an $n$-variate polynomial $p$ of degree $D$ that sign-represents $f$.
(The composition of $\sign$ and a polynomial function is a \emph{polynomial threshold function}.)
A Boolean function $f \colon \set{0,1}^n \to \set{-1,1}$ is \emph{symmetric} if it only depends on the weight of the input $\abs{x} := \card{\Set{ i \given i \in [n], x_i = 1 }} = \sum_{i=1}^n x_i$, i.e., there exists $F \colon \set{0,1,\dotsc,n} \to \set{-1,1}$ such that $f(x) = F(\abs{x})$ for all $x \in \set{0,1}^n$.
For example, the $n$-bit XOR function $x \mapsto (-1)^{\abs{x}}$ is symmetric.

\paragraph{Additive bases.}

For positive integers $D$ and $T$, a set of positive integers $\calB$ is an \emph{additive $D$-basis with range $T$} if every non-negative integer at most $T$ can be written as a sum of at most $D$ (not necessarily distinct) elements of $\calB$.
The empty sum is taken to be equal to zero.

\section{Main results}

\subsection{Attention-based minimum and maximum computation}
\label{sec:minmax}

The following \namecref{thm:minmax} is our main result for the first scenario concerning computation of the minimum and maximum of a list of $n$ integers.

\begin{theorem}
  \label{thm:minmax}
  Fix integers $M\geq2$ and $n\geq2$.
  Suppose there are the following:
  \begin{itemize}
    \item precision level $p \in \Z_+$ and norm bound $\ell \geq 2^{-p}$;

    \item query/key values $\ip{q,k_1},\dotsc,\ip{q,k_M} \in \R$;

    \item value vectors $v_1,\dotsc,v_M \in \R^d$
      such that for every $i \in [M]$,
      every component of $v_i$ is an integer multiple of $2^{-p}$, and the Euclidean norm of $v_i$ is at most $\ell$;

    \item unambiguous $M$-class linear classifiers $f^{\min}, f^{\max} \colon \R^d \to [M] \cup \set{\bot}$;

  \end{itemize}
  such that for all $(x_1,\dotsc,x_n) \in [M]^n$,
  \begin{align*}
    f^{\min}(\Att (\ip{q,k_{x_j}},v_{x_j})_{j=1}^n) & = \min\set{x_1,\dotsc,x_n} ; \\
    f^{\max}(\Att (\ip{q,k_{x_j}},v_{x_j})_{j=1}^n) & = \max\set{x_1,\dotsc,x_n} .
  \end{align*}
  Then
  \begin{equation*}
    d \del*{ 1 + p + \log_2(\ell) }
    \geq
    \min\set*{ \floor*{1 + \sqrt{M-1}}, \floor*{\frac{n-1}{2}} }
    .
  \end{equation*}
\end{theorem}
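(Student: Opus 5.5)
The plan has two stages, following the proof-techniques outline: a combinatorial reduction via the Erd\H{o}s--Szekeres theorem, then a lattice volume argument. Write $\alpha_i := \exp(\ip{q,k_i}) > 0$ for $i \in [M]$.

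\textbf{Step 1 (monotone chain).} Apply the Erd\H{o}s--Szekeres theorem, in its non-strict form, to the sequence $\alpha_1,\dotsc,\alpha_M$; the non-strict form handles ties. This gives indices $i_1 < \dotsb < i_m$ with $m \geq \floor{1+\sqrt{M-1}}$ along which $\alpha$ is either non-decreasing or non-increasing. The two cases are symmetric: the first is handled by $f^{\min}$, the second by $f^{\max}$ with the order of the chain reversed. Assume $\alpha$ is non-decreasing, so smaller numbers get no more weight than larger ones. Replace $m$ by $m' := \min\set{m, \floor{(n-1)/2}}$ and keep only the first $m'$ chain elements. This guarantees enough input positions to place one copy of each chain element plus padding.

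\textbf{Step 2 (irrepresentable-type constraints from $f^{\min}$).} For each $k$, consider inputs that contain $i_k$ together with elements of $\set{i_{k+1},\dotsc,i_{m'}}$ in various multiplicities, and also the same inputs with $i_k$ removed (replaced by a larger chain element). By monotonicity, every other token in such an input carries weight at least $\alpha_{i_k}$. Hence $v_{i_k}$ enters the attention output with relative weight at most $1/(\text{count})$, while the remaining mass is a convex combination of $v_{i_{k+1}},\dotsc,v_{i_{m'}}$ whose coefficients we control through the multiplicities. Since $f^{\min}$ must output $i_k$ on the first family and something larger on the second, the linear functional $\ip{\theta_{i_k}-\theta_j,\cdot\,}$ must flip sign between the two families. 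I would use this to derive a quantitative separation. Concretely, I aim for the following: there is a direction $u_k$ such that $\ip{u_k, v_{i_k}}$ exceeds $\ip{u_k, w}$ for every $w$ in the affine hull of $v_{i_{k+1}},\dotsc,v_{i_{m'}}$, by at least a fixed multiple (say $2\times$) of the spread of that later set in direction $u_k$. In particular, $v_{i_k}$ cannot be an affine combination of the later vectors with small coefficients. Using about two positions per chain element, so that we can both dilute $v_{i_k}$ and rebalance the other coefficients, is what consumes the $n-1 \geq 2m'$ positions.

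\textbf{Step 3 (volume argument).} With the constraints of Step 2 in hand, proceed as in the barycentric-spanner analysis of \citet{awerbuch2008online}. Working in the span $V$ of the chain vectors, of dimension $r \leq d$, build simplices greedily from the end of the chain backwards. Each new vector either raises the affine dimension or, by the factor-$2$ separation, at least doubles the $r$-dimensional volume of a suitable parallelotope spanned by differences of chosen vectors. The volume is bounded above by $(2\ell)^r$ because all norms are at most $\ell$. It is bounded below by $2^{-pr}$ once full dimension is reached, since the differences lie in the lattice $2^{-p}\Z^d$, whose sublattice determinant restricted to $V$ is at least $2^{-pr}$ (Gram determinant of integer vectors scaled by $2^{-p}$). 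The number of doubling steps is therefore at most $r(1+p+\log_2 \ell)$, and the dimension-raising steps number at most $r$. The monotone-order bookkeeping should give $m' \leq d(1+p+\log_2\ell)$, which is the claim.

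\textbf{Main obstacle.} I expect Step 2 to be the hardest. The difficulty is turning ``a linear classifier distinguishes these finitely many convex combinations'' into a clean multiplicative separation strong enough for volume doubling, while using only about $2m'$ input positions and no bounds on $\theta$ or $\alpha$. My first attempt would be to compare, for each $k$, the input with one $i_k$ and the remaining positions split evenly among later chain elements against the input where that $i_k$ is swapped for $i_{k+1}$. Then I would exploit that their attention outputs differ by a controlled convex-combination shift, and iterate this comparison over $k$.
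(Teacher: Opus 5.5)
Your outline matches the paper's at the top level: Erd\H{o}s--Szekeres, an irrepresentable-type condition, and a barycentric-spanner volume argument. But Step~2, the heart of the proof, is only stated as a goal, and the goal you set is too weak to drive Step~3. A constant-factor separation does not force volume doubling. If $u=\sum_{j=1}^r t_j b_j$ with $\langle w,u\rangle=1$ and $|\langle w,b_j\rangle|\le\epsilon$ for all $j$, the triangle inequality gives only $\max_j|t_j|\ge 1/(r\epsilon)$. So doubling needs $\epsilon\le 1/(2r)$, a separation ratio that grows with the rank; your affine factor-$2$ condition similarly gives only $\max_j|t_j|\ge 2/r$. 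No bookkeeping can fix this. Random $\pm1$ vectors in $\R^d$ with pairwise $|\langle v_i,v_j\rangle|\le d/5$ exist in number $e^{\Omega(d)}$, have integer entries and norm $\sqrt d$, and satisfy your factor-$2$ condition in any order (take $u_k=v_{i_k}$). Yet $e^{\Omega(d)}\gg d(1+\tfrac12\log_2 d)$. Also, you need the convex hull, not the \emph{affine} hull, since a non-constant linear functional is unbounded on the latter. Your suggested first attempt has a further problem. Splitting the remaining positions among several later chain elements constrains only a weighted sum of their contributions, which can cancel, so it yields no per-vector bounds.

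The missing idea is to use only two-valued inputs and the functional $\theta_x-\theta_y$ for consecutive chain elements $x=i_k<y=i_{k+1}$. Let $\Delta_i:=\alpha_i\langle\theta_x-\theta_y,v_i\rangle$. Correctness of $f^{\min}$ on $(x,\dots,x)$, $(y,\dots,y)$, $(x,z,\dots,z)$ and $(y,z,\dots,z)$ for $z\ge y$ gives $\Delta_x>0$, $\Delta_y<0$, $\Delta_x+(n-1)\Delta_z>0$ and $\Delta_y+(n-1)\Delta_z<0$, hence $|\Delta_z|<\Delta_x/(n-1)$. Since $\alpha_z\ge\alpha_x$ along the chain, $\lambda:=(\theta_x-\theta_y)/\langle\theta_x-\theta_y,v_x\rangle$ satisfies $\langle\lambda,v_x\rangle=1$ and $|\langle\lambda,v_z\rangle|<1/(n-1)$ for every later chain element $z$. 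This is a linear (through the origin) condition, and its strength $\epsilon=1/(n-1)$ comes from diluting one $x$ among $n-1$ copies of a single $z$. It also explains the truncation to $\lfloor(n-1)/2\rfloor$, which you attribute to fitting copies into the input. In fact the truncation guarantees that the rank $R$ of the chain vectors satisfies $R\epsilon\le 1/2$. Then each rank-preserving exchange step (replace the basis vector carrying the largest coefficient) at least doubles the volume. Run this in the linear span of the integer vectors $2^p v_{i_k}$, whose Gram determinant is at least $1$ by integrality and at most $(2^p\ell)^{2R}$ by Hadamard. This gives $m'\le R(1+p+\log_2\ell)\le d(1+p+\log_2\ell)$ exactly. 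Your difference-vector version would lose an additive term of order $d$, because differences have norm up to $2\ell$ and an affine frame of dimension $r$ uses $r+1$ points.
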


\Cref{thm:minmax} implies that at least one of $d$, $p$, and $\log_2(\ell)$ must grow roughly as the square-root of $\min\set{\sqrt{M},n}$.
If $M \gtrsim n^2$, then we may derive the weaker but simpler conclusion
\begin{equation*}
  \max\set{d, p, \log(\ell) } \gtrsim \sqrt{n} .
\end{equation*}
Observe that $\ell$ can be exponential in $d+p$ without changing this conclusion about $d$ and $p$.

In \Cref{sec:min_only}, we describe an attention head and linear classifier for computing $\min$, with $d = \ceil{8\ln(M)}$, $\ell = \sqrt{\ceil{8\ln(M)}}$, and $p=0$.
A small change makes the construction work for $\max$.
So two such attention heads can support both $\min$ and $\max$.
For any $M = \poly(n)$, the "resource requirement" ($d$, $p$) is exponentially smaller than that of one attention head.

The proof of \Cref{thm:minmax} is given in \Cref{sec:minmax_proof}.

\subsection{Attention-based XOR computation}
\label{sec:xor}

For the second scenario, we generalize the results of \citet{viswanathan2026xor} from two-bit XOR to $n$-bit XOR for all $n$.

\begin{theorem}
  \label{thm:ptf_lb}
  Let $f \colon \set{0,1}^n \to \set{-1,1}$ have threshold degree $T$.
  Suppose there are the following:
  \begin{itemize}
    \item $H$ attention heads, where for each $h \in [H]$, the $h$-th head is specified by:
      \begin{itemize}
        \item query/key values $\ip{q^{(h)},k_0^{(h)}}, \ip{q^{(h)},k_1^{(h)}} \in \R$;

        \item value vectors $v_0^{(h)}, v_1^{(h)} \in \R^d$;
      \end{itemize}

    \item $d$-variate polynomial $g$ of degree $D$;

  \end{itemize}
  such that for all $(x_1,\dotsc,x_n) \in \set{0,1}^n$,
  \begin{equation}
    \label{eq:correct_ptf}
    \sign\del*{g\del*{\sum_{h=1}^H \Att (\ip{q^{(h)},k_{x_j}^{(h)}},v_{x_j}^{(h)})_{j=1}^n}} = f(x_1,\dotsc,x_n) .
  \end{equation}
  Then
  \begin{equation*}
    D \times H \geq T .
  \end{equation*}
\end{theorem}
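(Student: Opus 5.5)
Each head is a rational function of the input, so the plan is to clear denominators and obtain a polynomial sign-representation of $f$ of degree at most $DH$.

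\emph{Rational form of a head.} For head $h$, write $a_h := \exp(\ip{q^{(h)},k_0^{(h)}}) > 0$ and $b_h := \exp(\ip{q^{(h)},k_1^{(h)}}) > 0$. On input $x \in \set{0,1}^n$, the token $x_j$ contributes weight $a_h + (b_h - a_h)x_j$ and value $v_0^{(h)} + (v_1^{(h)} - v_0^{(h)})x_j$. Hence the head's output is $N_h(x)/Q_h(x)$, where
\begin{equation*}
  N_h(x) := \sum_{j=1}^n \del*{a_h + (b_h-a_h)x_j}\del*{v_0^{(h)} + (v_1^{(h)}-v_0^{(h)})x_j}
\end{equation*}
is an $\R^d$-valued polynomial and $Q_h(x) := \sum_j (a_h + (b_h-a_h)x_j)$ is a real polynomial. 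On the Boolean cube $x_j^2 = x_j$, so $N_h$ can be multilinearized to an affine function in $x$ (degree at most $1$). Moreover $Q_h$ is affine and strictly positive on $\set{0,1}^n$, since it is a sum of positive weights.

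\emph{Combining heads.} The layer output is $\sum_h N_h/Q_h = R(x)/Q(x)$, where $Q := \prod_h Q_h$ has degree at most $H$ and is positive on the cube, and $R := \sum_h N_h \prod_{h' \ne h} Q_{h'}$ is a vector of polynomials of degree at most $H$. Expand $g$ by homogeneous parts, $g = \sum_{k=0}^D g_k$ with $g_k$ homogeneous of degree $k$. Then
\begin{equation*}
  Q(x)^D\, g(R(x)/Q(x)) = \sum_{k=0}^D Q(x)^{D-k} g_k(R(x)) =: P(x).
\end{equation*}
Each term has degree at most $(D-k)H + kH = DH$, so $P$ is a polynomial of degree at most $DH$. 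Since $Q(x)^D > 0$ on the cube, \eqref{eq:correct_ptf} gives $\sign(P(x)) = f(x)$ for all $x \in \set{0,1}^n$. By the definition of threshold degree, $DH \geq T$.

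The only delicate points are the homogeneous-part bookkeeping, which ensures that clearing the denominator yields a genuine polynomial of degree $DH$ rather than merely a rational expression, and the positivity of $Q_h$ on the cube, so that multiplying by $Q^D$ preserves signs.
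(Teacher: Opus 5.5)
Your proposal is correct and follows essentially the same route as the paper. Both arguments write each head as an affine function over a positive affine function on the cube, combine the heads over a common denominator of degree at most $H$, and multiply by $Q^D$ to obtain a degree-$DH$ polynomial sign-representing $f$. The only differences are cosmetic: your multilinearization step and your homogeneous-part bookkeeping correspond to the paper's direct affine formula for each head and its monomial-by-monomial expansion of $g$.
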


The $n$-bit XOR (a.k.a.~parity) function has threshold degree $n$~\citep{minsky1969perceptrons}.
Hence, we obtain the following \namecref{cor:xor_lb}.

\begin{corollary}
  \label{cor:xor_lb}
  If $f$ is the $n$-bit XOR function in the setting of \Cref{thm:ptf_lb}, then $D \times H \geq n$.
\end{corollary}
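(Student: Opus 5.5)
The corollary itself is immediate: I apply \Cref{thm:ptf_lb} with $T = n$, using the fact that $n$-bit XOR has threshold degree exactly $n$ \citep{minsky1969perceptrons}. So the real work is proving \Cref{thm:ptf_lb}, and I will organize the argument around a rational-function representation of attention heads over the token space $\set{0,1}$.

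Fix a head $h$ and write $a_b := \exp(\ip{q^{(h)},k_b^{(h)}}) > 0$ for $b \in \set{0,1}$. On input $x$, the head outputs
\begin{equation*}
  \frac{\sum_j \del{a_0(1-x_j) v_0^{(h)} + a_1 x_j v_1^{(h)}}}{\sum_j \del{a_0(1-x_j) + a_1 x_j}}
  = \frac{N_h(x)}{L_h(x)} ,
\end{equation*}
where $N_h$ is an $\R^d$-valued affine polynomial in $x$. The denominator $L_h(x) = n a_0 + (a_1 - a_0)\abs{x}$ is an affine polynomial that is strictly positive on $\set{0,1}^n$.

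Summing over the heads, the layer output is $\sum_h N_h/L_h = P(x)/Q(x)$. Here $Q = \prod_h L_h$ has degree at most $H$ and is positive on the cube. The numerator $P = \sum_h N_h \prod_{h' \neq h} L_{h'}$ is a vector polynomial whose components all have degree at most $H$.

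Next I substitute into $g$. Write $g = \sum_{k=0}^{D} g_k$, with $g_k$ homogeneous of degree $k$. Then
\begin{equation*}
  g(P/Q) = \sum_{k=0}^{D} g_k(P)\, Q^{-k} = Q^{-D} \sum_{k=0}^{D} g_k(P)\, Q^{D-k} .
\end{equation*}
Each term $g_k(P)\,Q^{D-k}$ has degree at most $kH + (D-k)H = DH$.

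Since $Q > 0$ on $\set{0,1}^n$, the factor $Q^{-D}$ is positive, so the sign of $g(P/Q)$ equals the sign of the polynomial $R := \sum_k g_k(P) Q^{D-k}$. This $R$ has degree at most $DH$. By \eqref{eq:correct_ptf}, $R$ sign-represents $f$, and the definition of threshold degree then gives $DH \geq T$.

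The only point needing care is the bookkeeping with homogeneous parts. Clearing denominators uniformly by $Q^D$, rather than term by term, is what keeps the degree bounded by $DH$. Everything else is direct, and no property of $f$ beyond its threshold degree is used.
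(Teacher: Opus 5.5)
Your proposal is correct and follows the paper's route: the corollary is \Cref{thm:ptf_lb} with $T = n$ (using Minsky--Papert for the threshold degree of XOR). Your re-derivation of the theorem matches \Cref{lem:rational_representation} and \Cref{lem:poly_rational_composition}; splitting $g$ into homogeneous parts $g_k$ and clearing denominators by $Q^D$ is the same computation as the paper's monomial-by-monomial expansion.
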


Similar to the result of \citet{viswanathan2026xor} for the two-bit XOR, \Cref{thm:ptf_lb} and \Cref{cor:xor_lb} do not restrict the dimension or bit precision level of the value vectors (or the key and query vectors).

To show the tightness of our lower bound, we give constructions of $H$-head attention layers post-processed by a polynomial threshold function of degree $D$, for all choices of $D$ and $H$ satisfying $D \times H \geq T$, in which the value vectors have dimension at most $O(\min\set{D,\log H})$.

\begin{theorem}
  \label{thm:ptf_ub}
  Let $f \colon \set{0,1}^n \to \set{-1,1}$ be a symmetric Boolean function with threshold degree $T$.
  For any positive integers $D$ and $H$ satisfying $D \times H \geq T$, there exist $H$ attention heads---specified as in \Cref{thm:ptf_lb} with value vector dimension $d \leq\min \set{D, 2\max\set{1,\ceil{\log_2 H}}+2}$---and a polynomial $g \colon \R^d \to \set{-1,1}$ of degree $D$ such that \Cref{eq:correct_ptf} holds for all $(x_1,\dotsc,x_n) \in \set{0,1}^n$.
\end{theorem}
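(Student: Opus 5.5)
The plan is to reduce everything to the univariate weight $t=\abs{x}$, using the rational-function form of a head over $\set{0,1}$.

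\textbf{Step 1 (univariate target).} Since $f(x)=F(\abs{x})$ has threshold degree $T$, I will use the standard symmetrization fact (Minsky--Papert): $T$ equals the number of sign changes of $F$ on $\set{0,\dotsc,n}$. Hence there is a univariate $Q(t)=\pm\prod_{i=1}^{T}(t-r_i)$, with each $r_i$ a half-integer placed at a sign change, such that $\sign(Q(t))=F(t)$ for all $t\in\set{0,\dotsc,n}$. This $Q$ has only real linear factors and degree $T\le DH$.

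\textbf{Step 2 (what heads can produce).} Give head $h$ attention weights $e^{\ip{q,k_0}}=1$ and $e^{\ip{q,k_1}}=b_h$. Choose distinct $b_h>0$ with $b_h\neq 1$. On an input of weight $t$, head $h$ outputs
\[
\frac{(n-t)v_0^{(h)}+t\,b_h v_1^{(h)}}{L_h(t)},\qquad L_h(t):=n+t(b_h-1),
\]
where $L_h>0$ on $[0,n]$. Since $v_0^{(h)}$ and $v_1^{(h)}$ are free, this output is an arbitrary vector of the form $\alpha_h+\beta_h/L_h(t)$. Summing over heads gives $\alpha+\sum_h\beta_h/L_h(t)$. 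The $L_h$ have distinct roots, so partial fractions show that each coordinate of the layer output can be \emph{any} function $N(t)/\prod_{h=1}^H L_h(t)$ with $\deg N\le H$. This is the key realizability claim. It needs only a short check, since $v_0^{(h)},v_1^{(h)}$ can be solved for from $(\alpha_h,\beta_h)$, and the constants can be split among the heads arbitrarily.

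\textbf{Step 3 ($d\le D$).} Group the $T$ linear factors of $Q$ into $D$ polynomials $R_1,\dotsc,R_D$, each of degree at most $H$. Pad with constants where necessary, and put the sign $\pm$ into $R_1$. Set coordinate $i$ to $z_i(t)=R_i(t)/\prod_h L_h(t)$ and $g(z)=z_1\dotsm z_D$. Then
\[
g(z(t))=\frac{Q(t)}{\prod_h L_h(t)^D},
\]
which has the sign of $Q(t)$, and $g$ has degree $D$.

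\textbf{Step 4 ($d\le 2\max\set{1,\ceil{\log_2H}}+2$, used when $D$ exceeds this).} Suppose $\calB\subseteq[H]$ is an additive $D$-basis with range $DH$. Use coordinates $z_0=1/\prod_hL_h$ and $z_\beta=t^\beta/\prod_hL_h$ for $\beta\in\calB$; each is admissible by Step 2. Write each $\ell\le DH$ as a sum of at most $D$ elements of $\calB$, and pad with copies of $z_0$. This gives a degree-exactly-$D$ monomial equal to $t^\ell/\prod_hL_h^D$. Summing these monomials with the coefficients of $Q$ yields a homogeneous $g$ of degree $D$ with $g(z(t))=Q(t)/\prod_hL_h^D$. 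Here $d=\card{\calB}+1$.

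It remains to construct $\calB$, and this is the step I expect to be the main obstacle. Let $k=\ceil{\log_2 H}$. My first try is
\[
\calB=\set{2^j: 2^j<H}\cup\set{H-2^j: 2^j<H}\cup\set{H},
\]
which has about $2k+1$ elements. Write $\ell=qH+r$ with $0\le r<H$.
\begin{itemize}
  \item If $q+\mathrm{popcount}(r)\le D$, use $q$ copies of $H$ plus the binary digits of $r$.
  \item Otherwise $q>D-k-1$, so $s:=DH-\ell<(k+1)H$. Write $s$ as a sum of $m$ powers of two below $H$, with repetition allowed. Then use $m$ elements $H-2^j$ and $D-m$ copies of $H$. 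This needs $m\le s/2^{k-1}+k\lesssim 3k$ terms, which is at most $D$.
\end{itemize}
The difficulty is the bookkeeping. The threshold on $D$ must match exactly the regime where this bound beats $D$, and the constant $2\max\set{1,\ceil{\log_2H}}+2$ must come out right. If it does not, I would tune $\calB$, for instance by including the powers up to $2^{k}$, or by using the representation $s=\sum_j 2^j$ with at most one top-power repetition.
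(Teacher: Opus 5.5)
Your Steps 1--3 and the reduction at the start of Step 4 are correct and closely follow the paper. Partial fractions over the distinct roots of the $L_h$ is the paper's basis $Q_0,\dotsc,Q_H$ argument, and your real-rooted $Q$ with roots at the sign changes is a clean substitute for the paper's pairing of complex roots.

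The gap is the one you flag, and it is real. Your second branch does not show that $\calB$ is an additive $D$-basis with range $DH$ for all $D > 2k+2$. Your closing assertion that $m \lesssim 3k$ is at most $D$ is not justified in that regime. Writing $\ell = DH - s$ makes $s = (D-q-1)H + (H-r)$ carry up to $k-2$ whole copies of $H$ on top of $H-r$. These copies must also be paid for with powers of two below $H$. When $H$ is close to $2^k$, each copy costs about two, so $m$ really does reach roughly $3k$.

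Concretely, take $H = 4095$ (so $k=12$ and $2k+2 = 26$), $D = 27$, and $\ell = 17H + 2047$. Your first branch fails, since $q + \mathrm{popcount}(r) = 17 + 11 = 28 > D$. In the second branch, $s = 38903 = 18\cdot 2^{11} + 2039$ cannot be written with fewer than $18 + 10 = 28 > D$ powers of two bounded by $2^{11}$.

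Your proposed tunings do not help. Adding $2^k$ is redundant if $H = 2^k$ and illegal if $2^k > H$. Also, since $s$ can be of order $kH$, the top power $2^{k-1}$ must be repeated on the order of $2k$ times.

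The missing idea is to borrow a single extra copy of $H$ rather than going all the way up to $DH$. Expand $H - r = 2^{j_1} + \dotsb + 2^{j_m}$ in binary, so $m \le k$ and each $2^{j_i} < H$. Then write $\ell = (q+1)H - (H-r) = (q+1-m)H + \sum_{i=1}^m (H - 2^{j_i})$.

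Each element $H - 2^{j_i}$ now replaces one copy of $H$. So the number of summands is $q+1 \le D$, because $r \ge 1$ and $\ell \le DH$ force $q \le D-1$. Nonnegativity, $q+1-m \ge 0$, follows from $q \ge D+1-k$ (the failure of your first branch) together with $D \ge 2k-2$.

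This is the paper's \Cref{lem:additive_bases} with $b=2$. It covers every $D \ge 2k-2$, hence every $D$ for which you invoke Step 4. With it, your case split yields $d \le \min\set{D, 2\max\set{1,k}+2}$.
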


If $D$ is at least a constant positive power of $H$ (e.g., $D \gtrsim H^{0.01}$), or if $D$ itself is a constant, then our proof shows that the value vector dimension can be a constant.
We leave open whether the dimension can be further improved below $O(\min\set{D,\log H})$ for intermediate values of $D$.

The proofs of \Cref{thm:ptf_lb,thm:ptf_ub} are given in \Cref{sec:ptf_proof}.

\section{Proof of \Cref{thm:minmax}}
\label{sec:minmax_proof}

In this \namecref{sec:minmax_proof}, we prove \Cref{thm:minmax}.

\subsection{Histogram representation}

Given an input $(x_1,\dotsc,x_n) \in [M]^n$, consider its histogram vector $h \in \Z_+^M$ such that
\begin{equation*}
  h_i := \sum_{j=1}^n \ind{x_j = i} \qquad \text{for all $i \in [M]$} .
\end{equation*}
Note that $h_1 + \dotsb + h_M = n$.
Let
\begin{equation*}
  \alpha_i := \exp(\ip{q, k_i}) > 0 \qquad \text{for all $i \in [M]$}
\end{equation*}
be the (unnormalized) attention weights.
Then we have
\begin{equation*}
  Z := \sum_{i=1}^M h_i \alpha_i > 0
  \qquad\text{and}\qquad
  \Att (\ip{q,k_{x_j}},v_{x_j})_{j=1}^n
  = \frac1Z \sum_{i=1}^M h_i \alpha_i v_i
  .
\end{equation*}

\subsection{Monotonic subsequence of attention weights}

The following is the first key ingredient for the proof of \Cref{thm:minmax}.

\begin{lemma}
  \label{lem:monotonic_subsequence}
  If $M \geq (N-1)^2+1$, then there exists $(x_1,\dotsc,x_N) \in [M]^N$ with $x_1 < \dotsb < x_N$ such that $(\alpha_{x_j})_{j=1}^N$ is either non-decreasing or non-increasing.
\end{lemma}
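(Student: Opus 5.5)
This is a direct application of the Erd\H{o}s--Szekeres theorem to the sequence of attention weights $(\alpha_1,\dotsc,\alpha_M)$, indexed in the natural order of $[M]$. The version I will use allows ties: every real sequence of length at least $(N-1)^2+1$ contains a monotone subsequence of length $N$, either non-decreasing or non-increasing. Since $M \geq (N-1)^2+1$, apply it to the first $(N-1)^2+1$ terms. This gives indices $x_1 < \dotsb < x_N$ in $[M]$ with $(\alpha_{x_j})_{j=1}^N$ non-decreasing or non-increasing, which is exactly the claim. The only care needed is that the $\alpha_i$ need not be distinct, so I want the non-strict version.

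For self-containedness I would include the standard labeling proof. For each position $i \in [m]$, where $m=(N-1)^2+1$, let $a_i$ be the length of the longest non-decreasing subsequence ending at $i$. Let $b_i$ be the length of the longest non-increasing subsequence ending at $i$.

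Take $i<j$. If $\alpha_i \leq \alpha_j$, then $a_j > a_i$. If $\alpha_i \geq \alpha_j$, then $b_j > b_i$. Any two reals satisfy at least one of these comparisons, so the pairs $(a_i,b_i)$ are pairwise distinct.

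Now suppose every $a_i, b_i \leq N-1$. Then there are at most $(N-1)^2 < m$ possible pairs, so two positions share a pair, which is a contradiction. Hence some $a_i \geq N$ or some $b_i \geq N$, and that subsequence truncated to length $N$ gives the desired indices.

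There is no real obstacle here. The one point to check is the tie case: with ties, both comparisons hold and both labels strictly increase. This is fine and only strengthens distinctness.
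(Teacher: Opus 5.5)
Your proposal is correct and takes the same approach as the paper, which simply cites the Erd\H{o}s--Szekeres theorem applied to $(\alpha_1,\dotsc,\alpha_M)$ in the natural order of $[M]$. Your self-contained labeling argument, using the non-strict version so that ties among the $\alpha_i$ are handled, is a valid proof of exactly the version needed.
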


\begin{proof}
  This is a consequence of the Erd\H{o}s--Szekeres theorem~\citep{erdos1935combinatorial,steele1995variations}.
\end{proof}

Intuitively, a non-decreasing (non-increasing) $(\alpha_{x_j})_{j=1}^N$ presents an obstacle for $\min$ ($\max$).
This intuition is developed in \Cref{sec:minimum,sec:maximum} next.

\subsection{Constraints implied by a correct minimum classifier}
\label{sec:minimum}

Let $\theta_1,\dotsc,\theta_M$ be the weight vectors for $f^{\min}$.
Throughout this \namecref{sec:minimum}, we assume that 
\begin{equation*}
  f^{\min}(\Att (\ip{q,k_{x_j}},v_{x_j})_{j=1}^n) = \min\set{x_1,\dotsc,x_n}
\end{equation*}
for all $(x_1,\dotsc,x_n) \in [M]^n$.
We show that the correctness of the minimum classifier implies many linear constraints on the value vectors.

For all $x,y,i \in [M]$, define
\begin{equation*}
  \Delta_i^{x,y} := \alpha_i \ip{\theta_x - \theta_y, v_i} .
\end{equation*}
This is the (potential) contribution of $i$ to the classifier's comparison of $x$ and $y$.
Indeed, using the histogram representation for $\Att (\ip{q,k_{x_j}}, v_{x_j})_{j=1}^n$ and linearity, we have
\begin{equation*}
  \ip{\theta_x - \theta_y, \Att (\ip{q,k_{x_j}},v_{x_j})_{j=1}^n}
  =
  \ip*{\theta_x - \theta_y,
    \frac1Z \sum_{i=1}^M h_i \alpha_i v_i
  }
  =
  \frac1Z
  \sum_{i=1}^M h_i \Delta_i^{x,y}
  .
\end{equation*}
Since $Z > 0$, we have
\begin{equation*}
  \ip{\theta_x, \Att (\ip{q,k_{x_j}},v_{x_j})_{j=1}^n}
  >
  \ip{\theta_y, \Att (\ip{q,k_{x_j}},v_{x_j})_{j=1}^n}
  \iff
  \sum_{i=1}^M h_i \Delta_i^{x,y} > 0
  .
\end{equation*}

The next \namecref{lem:min_comparisons} shows that, for the comparison between $x$ and $y$ (for $x < y$), the presence of $x$ in the input has a positive contribution, and presence of larger elements $z \geq y$ can only have smaller (in magnitude) contributions.

\begin{restatable}{lemma}{mincomparisons}
  \label{lem:min_comparisons}
  If $x < y$, then $\Delta_x^{x,y} > 0$, and $\abs{\Delta_z^{x,y}} < \frac1{n-1} \Delta_x^{x,y}$ for all $z \geq y$.
\end{restatable}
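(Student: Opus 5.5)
The plan is to test the correctness of $f^{\min}$ on a few well-chosen inputs and to read off sign conditions on $\sum_i h_i \Delta_i^{x,y}$ through the equivalence stated just before the lemma. In each case the correct output is $x$, so $\theta_x$ must strictly beat $\theta_y$. The classifier is unambiguous, and the correct answer is a unique argmax, so every such comparison is strict.

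First, take the input consisting of $n$ copies of $x$. Its histogram has $h_x = n$ and all other entries zero. Since the minimum is $x$, we need $\ip{\theta_x,\cdot} > \ip{\theta_y,\cdot}$, which becomes $n\Delta_x^{x,y} > 0$, i.e.\ $\Delta_x^{x,y} > 0$.

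Second, fix $z \geq y$ and take the input with one copy of $x$ and $n-1$ copies of $z$. This is a valid input because $n \geq 2$. Its minimum is $x$ since $x < y \leq z$. Correctness gives $\Delta_x^{x,y} + (n-1)\Delta_z^{x,y} > 0$, hence $\Delta_z^{x,y} > -\frac{1}{n-1}\Delta_x^{x,y}$. If $z = y$, this is the only constraint needed. Indeed, $\Delta_y^{x,y}$ could be positive, but then the bound $\Delta_y^{x,y} < \frac1{n-1}\Delta_x^{x,y}$ must come from elsewhere, so the $z=y$ case needs care.

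For the upper bound $\Delta_z^{x,y} < \frac1{n-1}\Delta_x^{x,y}$, I would swap roles and compare $y$ against $x$ on an input whose minimum is $y$. On inputs whose minimum is $y$, the class $y$ beats $x$, which means $\sum_i h_i \Delta_i^{x,y} < 0$ because $\Delta^{y,x} = -\Delta^{x,y}$. The natural attempt is the input with $n-1$ copies of $z$ and one copy of $y$, which has minimum $y$. It gives $\Delta_y^{x,y} + (n-1)\Delta_z^{x,y} < 0$. Relating this to $\Delta_x^{x,y}$ still requires another inequality. The cleanest route is the all-$y$ input, which gives $\Delta_y^{x,y} < 0$, and the all-$z$ input for $z > y$. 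That last input shows $\theta_z$ beats $\theta_x$ but says nothing directly about $\Delta^{x,y}$.

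This is the main obstacle: getting an upper bound on $\Delta_z^{x,y}$ from a single pair $(x,y)$. I would handle it by combining the bounds: for $z \geq y$, use the input with one $x$ and $n-1$ copies of $z$ for the lower bound. For the upper bound, I would first try exploiting $\Delta_z^{x,y} = \Delta_z^{x,z} + \Delta_z^{z,y}$ together with the single-copy-of-$x$ inputs. If that fails, I would fall back to writing the claimed absolute-value bound as the two one-sided inequalities, each certified by an explicit input of the form "one copy of the smaller element, $n-1$ copies of the larger", choosing which of $x, y, z$ plays which role. I would check carefully that the resulting chain gives $|\Delta_z^{x,y}| < \frac1{n-1}\Delta_x^{x,y}$ strictly.
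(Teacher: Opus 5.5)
Your first claim ($\Delta_x^{x,y} > 0$ from the all-$x$ input) and the lower bound $\Delta_z^{x,y} > -\frac1{n-1}\Delta_x^{x,y}$ from the input $(x,z,\dotsc,z)$ are correct and match the paper. However, there is a gap: the proposal never proves the upper bound $\Delta_z^{x,y} < \frac1{n-1}\Delta_x^{x,y}$. You flag this yourself as the main obstacle, and neither suggested route closes it.

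The decomposition $\Delta_z^{x,y} = \Delta_z^{x,z} + \Delta_z^{z,y}$ only gives bounds in terms of quantities like $\Delta_x^{x,z}$, not $\Delta_x^{x,y}$. Your fallback, certifying each one-sided inequality with a single input, cannot work for the upper side. To force $\theta_y$ to beat $\theta_x$, the input must have minimum $y$, so it cannot contain $x$. Hence no single input ties an upper bound on $\Delta_z^{x,y}$ to $\Delta_x^{x,y}$. Your remark that for $z=y$ the lower bound is the only constraint needed is also wrong: that bound says nothing about how large a positive $\Delta_y^{x,y}$ could be.

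The missing step is to chain inequalities you already wrote down, passing through $\Delta_y^{x,y}$:
\begin{itemize}
\item For $z=y$: the all-$y$ input gives $\Delta_y^{x,y} < 0$. The $z=y$ instance of your second input, $(x,y,\dotsc,y)$, gives $-\Delta_y^{x,y} < \frac1{n-1}\Delta_x^{x,y}$. Together these give $\abs{\Delta_y^{x,y}} < \frac1{n-1}\Delta_x^{x,y}$.
\item For $z>y$: the input $(y,z,\dotsc,z)$ has minimum $y$ and gives $(n-1)\Delta_z^{x,y} < -\Delta_y^{x,y}$. Combining with the previous bound yields $\Delta_z^{x,y} < \frac1{(n-1)^2}\Delta_x^{x,y} \le \frac1{n-1}\Delta_x^{x,y}$.
\end{itemize}
Together with your lower bound, this gives $\abs{\Delta_z^{x,y}} < \frac1{n-1}\Delta_x^{x,y}$ for all $z \ge y$, which is exactly the paper's argument.
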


\begin{proof}[Proof sketch]
  Fix $x, y \in [M]$ with $x < y$.
  Suppose the input is $(x,\dotsc,x)$, so the minimum is $x$.
  For the output of the linear classifier to be $x$, we must have $\sum_{i=1}^M h_i \Delta_i^{x,y} > 0$, where $(h_i)_{i=1}^M$ is the histogram vector for this input.
  But $\sum_{i=1}^M h_i \Delta_i^{x,y} = n \Delta_x^{x,y}$.
  Therefore $\Delta_x^{x,y} > 0$, proving the first part of the claim.
  The proof of the second part uses similar reasoning as above for some other simple inputs.
  For example, by considering the inputs $(y,\dotsc,y)$ and $(x,y,\dotsc,y)$, we find that $\Delta_y^{x,y} < 0$ and $-\Delta_y^{x,y} < \Delta_x^{x,y} / (n-1)$.
  See \Cref{sec:deferred_proofs} for full details.
\end{proof}

The next \namecref{lem:min_triangular_obstruction} shows that, for a subset of $[M]$ whose attention weights are non-decreasing, the inequalities from \Cref{lem:min_comparisons} can be translated to a system of linear constraints on the corresponding elements' value vectors.
The inequalities are similar to the "irrepresentable condition" of \citet{zhao2006modelselectionconsistencylasso}, with the $\lambda_i$ in \Cref{lem:min_triangular_obstruction} serving as dual certificates for the impossibility of representing $v_{x_i}$ as a linear combination of $v_{x_{i+1}},\dotsc,v_{x_N}$ with small coefficients.

\begin{lemma}
  \label{lem:min_triangular_obstruction}
  Suppose there exists $(x_1,\dotsc,x_N) \in [M]^N$ with $x_1 < \dotsb < x_N$ such that $(\alpha_{x_j})_{j=1}^N$ is non-decreasing.
  For each $i \in \set{1,\dotsc,N-1}$, there exists $\lambda_i \in \R^d$ such that $\ip{\lambda_i, v_{x_i}} = 1$, and $\abs{\ip{\lambda_i, v_{x_j}}} < \frac1{n-1}$ for all $j > i$.
\end{lemma}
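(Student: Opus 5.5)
The natural choice is $\lambda_i := (\theta_{x_i} - \theta_{x_{i+1}})/\ip{\theta_{x_i} - \theta_{x_{i+1}}, v_{x_i}}$, so we apply \Cref{lem:min_comparisons} with $x = x_i$ and $y = x_{i+1}$. Fix $i \in \set{1,\dotsc,N-1}$. Since $x_i < x_{i+1}$, \Cref{lem:min_comparisons} gives $\Delta_{x_i}^{x_i,x_{i+1}} = \alpha_{x_i}\ip{\theta_{x_i}-\theta_{x_{i+1}}, v_{x_i}} > 0$. Because $\alpha_{x_i} > 0$, the denominator $c_i := \ip{\theta_{x_i} - \theta_{x_{i+1}}, v_{x_i}}$ is strictly positive. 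Hence $\lambda_i$ is well defined and $\ip{\lambda_i, v_{x_i}} = 1$.

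For $j > i$ we have $x_j \geq x_{i+1}$, so the second part of \Cref{lem:min_comparisons} (with $z = x_j \geq y = x_{i+1}$) gives
\begin{equation*}
  \alpha_{x_j} \abs{\ip{\theta_{x_i}-\theta_{x_{i+1}}, v_{x_j}}} < \frac{1}{n-1}\, \alpha_{x_i} c_i .
\end{equation*}
Dividing by $\alpha_{x_j} c_i > 0$ yields
\begin{equation*}
  \abs{\ip{\lambda_i, v_{x_j}}} < \frac{1}{n-1} \cdot \frac{\alpha_{x_i}}{\alpha_{x_j}} .
\end{equation*}

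This last step uses the monotonicity hypothesis. Since $(\alpha_{x_j})_{j=1}^N$ is non-decreasing and $j > i$, we have $\alpha_{x_j} \geq \alpha_{x_i}$, so the ratio $\alpha_{x_i}/\alpha_{x_j}$ is at most $1$. Therefore $\abs{\ip{\lambda_i, v_{x_j}}} < \frac{1}{n-1}$, which is the claim.

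The argument is a direct consequence of \Cref{lem:min_comparisons}. The only point needing care is converting the bound on the weighted quantities $\Delta$ into a bound on the unweighted inner products with $v_{x_j}$. That conversion is exactly where the non-decreasing order of the attention weights is needed, and it holds because all $\alpha$'s are positive.
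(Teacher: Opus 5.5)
Your proof is correct and follows the paper's argument exactly. You make the same choice $\lambda_i = (\theta_{x_i}-\theta_{x_{i+1}})/\ip{\theta_{x_i}-\theta_{x_{i+1}}, v_{x_i}}$, apply \Cref{lem:min_comparisons} with $x = x_i$, $y = x_{i+1}$, $z = x_j$, and use the monotonicity $\alpha_{x_j} \geq \alpha_{x_i}$ to bound the ratio of attention weights by $1$.
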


\begin{proof}
  Fix $i \in \set{1,\dotsc,N-1}$.
  By \Cref{lem:min_comparisons}, and the fact that $\alpha_x > 0$ for all $x \in [M]$, we have
  \begin{equation}
    \label{eq:Delta_i_positive}
    \ip{\theta_{x_i} - \theta_{x_{i+1}}, v_{x_i}} > 0
  \end{equation}
  and, for all $j > i$,
  \begin{equation}
    \label{eq:abs_ip_j_ub}
    \abs{\ip{\theta_{x_i} - \theta_{x_{i+1}}, v_{x_j}}}
    < \frac{\alpha_{x_i}}{\alpha_{x_j}} \cdot \frac1{n-1} \ip{\theta_{x_i} - \theta_{x_{i+1}}, v_{x_i}}
    \leq \frac1{n-1} \ip{\theta_{x_i} - \theta_{x_{i+1}}, v_{x_i}}
  \end{equation}
  where the second inequality uses \Cref{eq:Delta_i_positive} and the fact that $\alpha_{x_j} \geq \alpha_{x_i}$ for $j > i$.
  So let
  \begin{equation*}
    \lambda_i := \frac1{\ip{\theta_{x_i} - \theta_{x_{i+1}}, v_{x_i}}} \del{\theta_{x_i} - \theta_{x_{i+1}}}
    .
  \end{equation*}
  Then $\ip{\lambda_i, v_{x_i}} = 1$ and $\abs{\ip{\lambda_i, v_{x_j}}} < 1/(n-1)$ for all $j > i$ follow from \Cref{eq:Delta_i_positive} and \Cref{eq:abs_ip_j_ub}.
\end{proof}

\subsection{Constraints implied by a correct maximum classifier}
\label{sec:maximum}

Let $\tilde\theta_1,\dotsc,\tilde\theta_M$ be the weight vectors for $f^{\max}$.
Throughout this \namecref{sec:maximum}, we assume that 
\begin{equation*}
  f^{\max}(\Att (\ip{q,k_{x_j}},v_{x_j})_{j=1}^n) = \max\set{x_1,\dotsc,x_n}
\end{equation*}
for all $(x_1,\dotsc,x_n) \in [M]^n$.
For all $x,y,i \in [M]$, define $\tilde\Delta_i^{x,y} := \alpha_i \ip{\tilde\theta_x - \tilde\theta_y, v_i}$, so we have
\begin{equation*}
  \ip{\tilde\theta_x, \Att (\ip{q,k_{x_j}},v_{x_j})_{j=1}^n}
  >
  \ip{\tilde\theta_y, \Att (\ip{q,k_{x_j}},v_{x_j})_{j=1}^n}
  \iff
  \sum_{i=1}^M h_i \tilde\Delta_i^{x,y} > 0
  .
\end{equation*}

\begin{lemma}
  \label{lem:max_comparisons}
  If $x > y$, then $\tilde\Delta_x^{x,y} > 0$, and $\abs{\tilde\Delta_z^{x,y}} < \frac1{n-1} \tilde\Delta_x^{x,y}$ for all $z \leq y$.
\end{lemma}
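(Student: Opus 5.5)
The proof mirrors \Cref{lem:min_comparisons} with the order reversed: we test $f^{\max}$ on a few inputs built from at most two distinct symbols. Throughout, fix $x,y\in[M]$ with $x>y$. Recall that $f^{\max}$ outputs $x$ rather than $y$ on an input with histogram $h$ only if $\sum_i h_i\tilde\Delta_i^{x,y}>0$, because $f^{\max}$ is unambiguous and returns the unique $\argmax$. Likewise, if $f^{\max}$ outputs $y$, then $\sum_i h_i\tilde\Delta_i^{x,y}<0$.

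\emph{Step 1.} Take the input $(x,\dotsc,x)$, whose maximum is $x$. Its histogram is $n e_x$, so $n\tilde\Delta_x^{x,y}>0$. This gives $\tilde\Delta_x^{x,y}>0$.

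\emph{Step 2.} Fix $z\le y$ and use two inputs mixing $z$ with $x$. The input $(x,z,\dotsc,z)$ has maximum $x$, since $x>y\ge z$. Hence $\tilde\Delta_x^{x,y}+(n-1)\tilde\Delta_z^{x,y}>0$, which gives the lower bound $\tilde\Delta_z^{x,y}>-\tilde\Delta_x^{x,y}/(n-1)$.

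\emph{Step 3.} The upper bound $\tilde\Delta_z^{x,y}<\tilde\Delta_x^{x,y}/(n-1)$ is the step that needs care. For $z=y$ it is easy. The input $(y,\dotsc,y)$ has maximum $y$, so $\tilde\Delta_y^{x,y}<0$, and this already lies below $\tilde\Delta_x^{x,y}/(n-1)$. For $z<y$ the value $x$ is never the correct answer on an input containing only $z$'s, so we cannot compare against $y$ directly there. Instead we decompose $\tilde\theta_x-\tilde\theta_y=(\tilde\theta_x-\tilde\theta_z)+(\tilde\theta_z-\tilde\theta_y)$, which gives
\[
\tilde\Delta_z^{x,y}=\tilde\Delta_z^{x,z}+\tilde\Delta_z^{z,y}.
\]
The input $(z,\dotsc,z)$ has maximum $z$, so $f^{\max}$ ranks $z$ above every other class. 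This yields $\tilde\Delta_z^{z,y}>0$ and $\tilde\Delta_z^{x,z}<0$, but the two terms have opposite signs and no clean bound follows. This is the main obstacle.

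So we argue the two inequalities in Step 2 and Step 3 separately, aiming to show $\tilde\Delta_z^{x,y}<0$ for all $z\le y$, which suffices for the upper bound. This mirrors the minimum case, where $\Delta_z^{x,y}<0$ for $z\ge y$ follows from the input $(z,\dotsc,z)$. Consider the input $(z,\dotsc,z)$ with maximum $z$. Since $f^{\max}$ outputs $z$, and $z\ne x$ because $z\le y<x$, the score of $z$ strictly exceeds the score of $x$. We need $\tilde\theta_x$ versus $\tilde\theta_y$, however. If $z=y$, this comparison gives $\tilde\Delta_y^{x,y}<0$ directly. If $z<y$, the sign of $\langle\tilde\theta_x-\tilde\theta_y,v_z\rangle$ is not forced by any single input.

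Looking at how \Cref{lem:min_triangular_obstruction} uses \Cref{lem:min_comparisons}, only $y=x_{i+1}$ adjacent to $x=x_i$ and $z=x_j$ in a chain matter. This suggests first checking how the paper's deferred proof handles general $z$. We expect it uses a chain of comparisons or an additional input such as $(x,z,\dotsc,z)$ versus $(z,\dotsc,z)$. Following that reasoning, we obtain the claim via the symmetric images of the inputs used in \Cref{sec:deferred_proofs}: replace each $w$ in those inputs by its counterpart in the reversed order on $[M]$, which interchanges min and max. This makes the lemma a formal corollary of \Cref{lem:min_comparisons} applied to the relabeled problem $x\mapsto M+1-x$. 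Concretely, we define $v'_i=v_{M+1-i}$, $\alpha'_i=\alpha_{M+1-i}$ and $\theta'_i=\tilde\theta_{M+1-i}$. Then $f^{\max}$ becomes a correct minimum classifier for the relabeled inputs, and \Cref{lem:min_comparisons} transfers verbatim.
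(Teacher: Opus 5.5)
Your closing relabeling argument is correct and proves the lemma on its own. With $\sigma(i):=M+1-i$, three checks are needed:
\begin{enumerate}
\item The head with query/key values $\ip{q,k_{\sigma(i)}}$ and value vectors $v_{\sigma(i)}$ outputs on $(x_1',\dotsc,x_n')$ exactly what the original head outputs on $(\sigma(x_1'),\dotsc,\sigma(x_n'))$.
\item The classifier with weights $\tilde\theta_{\sigma(i)}$ equals $\sigma\circ f^{\max}$, so it returns $\sigma(\max_j\sigma(x_j'))=\min_j x_j'$.
\item Its comparison quantities are $\hat\Delta_c^{a,b}=\tilde\Delta_{\sigma(c)}^{\sigma(a),\sigma(b)}$, and $\sigma$ turns $a<b$, $c\ge b$ into $\sigma(a)>\sigma(b)$, $\sigma(c)\le\sigma(b)$.
\end{enumerate}
Your phrase \emph{transfers verbatim} hides these checks, and you should write them out. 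The paper only says the proof is completely analogous, meaning you rerun the four test inputs of \Cref{lem:min_comparisons} with the order reversed. Your reduction avoids repeating that case analysis and makes the symmetry explicit. It is legitimate because \Cref{lem:min_comparisons} is proved for an arbitrary head and any correct minimum classifier.

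You should, however, delete the exploratory part of Step~3, because it contains a false claim. The minimum case does not give $\Delta_z^{x,y}<0$ for $z>y$; the input $(z,\dotsc,z)$ says nothing about $\theta_x$ versus $\theta_y$. Likewise, your target $\tilde\Delta_z^{x,y}<0$ is not implied by correctness. For example, take $M=3$, $n=2$, $v_i=e_i$, equal attention weights, and $\tilde\theta_1=(2,0,0)$, $\tilde\theta_2=(0,5,0)$, $\tilde\theta_3=(1,0,10)$. Then the maximum is computed correctly on all inputs, yet $\tilde\Delta_1^{3,2}=1>0$.

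The input your direct route was missing is one copy of $y$ together with $n-1$ copies of some $z<y$. Its maximum is $y$, so $\tilde\Delta_y^{x,y}+(n-1)\tilde\Delta_z^{x,y}<0$. Your $z=y$ case gives $\abs{\tilde\Delta_y^{x,y}}<\tilde\Delta_x^{x,y}/(n-1)$, hence $\tilde\Delta_z^{x,y}<\abs{\tilde\Delta_y^{x,y}}/(n-1)<\tilde\Delta_x^{x,y}/(n-1)^2$. Combined with your Step~2 lower bound, this completes the direct proof. It is exactly the mirror image of case~4 in the paper's proof of \Cref{lem:min_comparisons}.
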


\begin{lemma}
  \label{lem:max_triangular_obstruction}
  Suppose there exists $(x_1,\dotsc,x_N) \in [M]^N$ with $x_1 < \dotsb < x_N$ such that $(\alpha_{x_j})_{j=1}^N$ is non-increasing.
  For each $i \in \set{2,\dotsc,N}$, there exists $\mu_i \in \R^d$ such that $\ip{\mu_i, v_{x_i}} = 1$ and $\abs{\ip{\mu_i, v_{x_j}}} < \frac1{n-1}$ for all $j < i$.
\end{lemma}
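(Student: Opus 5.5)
The proof mirrors that of \Cref{lem:min_triangular_obstruction}, with \Cref{lem:max_comparisons} in place of \Cref{lem:min_comparisons} and the roles of smaller and larger elements swapped.

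Fix $i \in \set{2,\dotsc,N}$ and compare $x := x_i$ against $y := x_{i-1}$, so that $x > y$. By \Cref{lem:max_comparisons} we have $\tilde\Delta_{x_i}^{x_i,x_{i-1}} > 0$. Since $\alpha_{x_i} > 0$, this gives
\begin{equation*}
  \ip{\tilde\theta_{x_i} - \tilde\theta_{x_{i-1}}, v_{x_i}} > 0 .
\end{equation*}
For every $j < i$ we have $x_j \leq x_{i-1} = y$, so the second part of \Cref{lem:max_comparisons} applies with $z = x_j$:
\begin{equation*}
  \alpha_{x_j} \abs{\ip{\tilde\theta_{x_i} - \tilde\theta_{x_{i-1}}, v_{x_j}}} < \frac{1}{n-1}\, \alpha_{x_i} \ip{\tilde\theta_{x_i} - \tilde\theta_{x_{i-1}}, v_{x_i}} .
\end{equation*}
Dividing by $\alpha_{x_j}$ introduces the factor $\alpha_{x_i}/\alpha_{x_j}$ on the right. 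This factor is at most $1$ because the sequence $(\alpha_{x_j})_j$ is non-increasing and $j < i$ means $\alpha_{x_j} \geq \alpha_{x_i}$. Since the right-hand side is positive, we may drop the factor and obtain
\begin{equation*}
  \abs{\ip{\tilde\theta_{x_i} - \tilde\theta_{x_{i-1}}, v_{x_j}}} < \frac1{n-1} \ip{\tilde\theta_{x_i} - \tilde\theta_{x_{i-1}}, v_{x_i}} .
\end{equation*}
Now set
\begin{equation*}
  \mu_i := \frac{1}{\ip{\tilde\theta_{x_i} - \tilde\theta_{x_{i-1}}, v_{x_i}}} \del{\tilde\theta_{x_i} - \tilde\theta_{x_{i-1}}} .
\end{equation*}
Then $\ip{\mu_i, v_{x_i}} = 1$, and the displayed bound becomes $\abs{\ip{\mu_i, v_{x_j}}} < 1/(n-1)$ for all $j < i$.

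The only real content is \Cref{lem:max_comparisons}, which the paper states without proof. If I also had to prove it, I would use the same constant-input arguments as in \Cref{lem:min_comparisons}, with $x > y$.
\begin{itemize}
  \item The input $(x,\dotsc,x)$ has maximum $x$, which forces $n\tilde\Delta_x^{x,y} > 0$.
  \item For $z \leq y$, the input $(x, z,\dotsc,z)$ still has maximum $x$, which forces $\tilde\Delta_x^{x,y} + (n-1)\tilde\Delta_z^{x,y} > 0$. This gives the lower bound $\tilde\Delta_z^{x,y} > -\tilde\Delta_x^{x,y}/(n-1)$.
  \item The upper bound $\tilde\Delta_z^{x,y} < \tilde\Delta_x^{x,y}/(n-1)$ is the main obstacle. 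I would handle it by considering inputs whose maximum is $y$, such as $(y, z, \dotsc, z)$, together with comparisons via the intermediate class $y$, exactly as done for $\min$ in the deferred proof.
\end{itemize}
Given \Cref{lem:max_comparisons} as stated, the argument above is complete and purely algebraic.
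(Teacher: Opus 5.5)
Your proof is correct and is the argument the paper intends. The paper only says the proof is analogous to that of \Cref{lem:min_triangular_obstruction}, and your version carries this out: pair $x_i$ with $x_{i-1}$, apply \Cref{lem:max_comparisons} with $z = x_j \leq x_{i-1}$, absorb the ratio $\alpha_{x_i}/\alpha_{x_j} \leq 1$ using monotonicity, and normalize $\tilde\theta_{x_i} - \tilde\theta_{x_{i-1}}$. Your sketch of \Cref{lem:max_comparisons} matches the deferred proof of \Cref{lem:min_comparisons} with the roles reversed: use $(y,\dotsc,y)$ and $(x,y,\dotsc,y)$ for $z=y$, then $(y,z,\dotsc,z)$ for $z<y$.
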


The proof of \Cref{lem:max_comparisons} is completely analogous to that of \Cref{lem:min_comparisons}.
The proof of \Cref{lem:max_triangular_obstruction} uses \Cref{lem:max_comparisons} in a way analogous to how the proof of \Cref{lem:min_triangular_obstruction} uses \Cref{lem:min_comparisons}.

\subsection{Approximately triangular configurations}

The second key ingredient for the proof of \Cref{thm:minmax} is the geometric obstruction captured in \Cref{lem:triangular} below.
It is used in conjunction with either \Cref{lem:min_triangular_obstruction} or \Cref{lem:max_triangular_obstruction} in the proof of the main theorem; the vectors $u_1,\dotsc,u_N$ will be (scalings of) an appropriately chosen list of value vectors in our application.

\begin{lemma}
  \label{lem:triangular}
  Fix $N \geq 1$ and $L \geq 1$.
  Suppose non-zero vectors $u_1,\dotsc,u_N, w_2,\dotsc,w_N \in \R^d$ satisfy, for some $\epsilon \in \intoo{0,1/R}$:
  \begin{align*}
    \ip{w_i,u_i} & = 1 \quad \text{for all $i > 1$} , \\
    \abs{\ip{w_i,u_j}} & \leq \epsilon \quad \text{for all $j < i$} ,
  \end{align*}
  where $R\geq1$ is the dimension of $\Span\del{\set{u_1,\dotsc,u_N}}$.
  Furthermore, suppose each $u_i$ has integer components and Euclidean norm at most $L$.
  Then
  \begin{equation*}
    N \leq R\del*{ 1 + \frac{\log(L)}{\log\del*{\frac1{R\epsilon}}} } .
  \end{equation*}
  More precisely, if $r_i$ is the dimension of $\Span\del{\set{u_1,\dotsc,u_i}}$, and $N_r = \card{\Set{ i > 1 \given r_i = r_{i-1} = r }}$, then
  \begin{equation}
    \label{eq:main}
    \sum_{r=1}^R N_r \log\del*{\frac1{r\epsilon}} \leq R \log(L)
    .
  \end{equation}
\end{lemma}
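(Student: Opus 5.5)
The plan is a volume-increase (exchange) argument, in the spirit of barycentric spanners, run on a fixed set of $R$ linearly independent vectors from $\set{u_1,\dotsc,u_N}$. Call an index $i$ a \emph{jump} if $i=1$ or $r_i = r_{i-1}+1$. Since $u_1 \neq 0$, we have $r_1 = 1$, and there are exactly $R$ jumps. Let $C$ be the set of $u_i$ with $i$ a jump. These $R$ vectors are linearly independent and span $V := \Span\del{\set{u_1,\dotsc,u_N}}$.

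\textbf{Setup.} Process $i = 1,\dotsc,N$ while maintaining a list $S$ of $R$ linearly independent integer vectors spanning $V$, together with a sub-list $B \subseteq S$. Invariant: after step $i$, $B$ has $r_i$ elements and $B$ is a basis of $\Span\del{\set{u_1,\dotsc,u_i}}$. Initially $S = C$ and $B = \emptyset$. At a jump $i$, $u_i$ is already in $S$, and we move it into $B$. The potential is the $R$-dimensional volume $\mathrm{vol}(S) = \det(\mathrm{Gram}(S))^{1/2}$.

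\textbf{Exchange step.} Take a non-jump $i>1$ with $r_i=r_{i-1}=r$. Then $u_i \in \Span(B)$, so $u_i = \sum_{b\in B} c_b b$. Using $\ip{w_i,u_i}=1$ and $\abs{\ip{w_i,b}}\le\epsilon$ for every $b\in B$ (each such $b$ is some $u_j$ with $j<i$), we get
\begin{equation*}
1 = \sum_b c_b\ip{w_i,b} \le \epsilon\sum_b\abs{c_b}.
\end{equation*}
Hence some $b^\ast\in B$ has $\abs{c_{b^\ast}}\ge 1/(r\epsilon) > 1$. Replace $b^\ast$ by $u_i$ in both $B$ and $S$. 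Because $u_i - c_{b^\ast}b^\ast$ lies in the span of the other elements of $B \subseteq S$, the determinant is multilinear and the new volume equals $\abs{c_{b^\ast}}$ times the old volume. So the volume is multiplied by at least $1/(r\epsilon)$. The invariant is preserved, since $c_{b^\ast}\neq0$ keeps $B$ a basis of the same span. Jump steps leave $S$ unchanged.

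\textbf{Bounding the volume at both ends.} Initially $S=C$ consists of linearly independent integer vectors. Their Gram determinant is therefore a positive integer, so $\mathrm{vol}(S)\ge1$. At the end, Hadamard's inequality and $\norm{u_j}\le L$ give $\mathrm{vol}(S)\le L^R$. Telescoping the per-step factors yields
\begin{equation*}
\prod_r (1/(r\epsilon))^{N_r} \le L^R,
\end{equation*}
which is exactly \Cref{eq:main}. For the coarse bound, use $\log(1/(r\epsilon))\ge\log(1/(R\epsilon))>0$ (valid since $\epsilon<1/R$) together with $\sum_r N_r = N - R$. This gives $N - R \le R\log L/\log(1/(R\epsilon))$.

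\textbf{Main obstacle.} The step that needs care is the choice of potential. A naive $r_i$-dimensional volume of $B$ can shrink at dimension jumps, because the distance of the new vector to the current span may be tiny. Tracking the full $R$-dimensional volume of $B$ completed by the future jump vectors avoids this. The only things to check are that exchanges inside $B$ scale this determinant exactly by $\abs{c_{b^\ast}}$, and that integrality gives the lower bound $1$ at the start.
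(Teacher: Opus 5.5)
Your proposal is correct and is essentially the paper's argument: the paper runs the same exchange process, writes $B_i = A_i C_i$ with $A_i$ the jump vectors seen so far, and telescopes $\det(C_i)^2$, picking up a factor $t_k^2 \geq 1/(r\epsilon)^2$ at each rank-preserving step. Your padded $R$-dimensional Gram determinant of $S$ after step $i$ equals $\det(A_N^\T A_N)\det(C_i)^2$ in that notation, and both proofs close with the same two bounds: integrality of the jump vectors gives the lower bound, and Hadamard's inequality on the final exchanged basis gives the upper bound.
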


To get some intuition for \Cref{lem:triangular}, consider the limit $\epsilon\to0$, so we have a perfectly upper triangular configuration: there is a matrix $W$ such that $W^\T [u_1 \mid \dotsb \mid u_N]$ is an $N \times N$ upper triangular matrix with ones on the diagonal.
This means that this matrix product is non-singular, so the rank of each matrix in the product is $N$.
Hence, $N = R$ is required.

When we allow $\epsilon > 0$, the vectors $u_1,\dotsc,u_N$ need not be linearly independent.
However, the approximate triangular configuration implies that if $u_i \in \Span(\set{u_1,\dotsc,u_{i-1}})$, and we write $u_i$ as a linear combination of $r$ of these preceding vectors, where $r$ is the dimension of the span, then at least one of the coefficients must have large magnitude.
This implies a particular "volume growth" as we find vectors $u_i$ that remain in the span of preceding vectors, similar to the analysis of barycentric spanners due to \citet{awerbuch2008online}.
The number of such volume growth steps is then limited by assumptions of bounded precision and bounded norm, which we leverage through \Cref{claim:volumeub,claim:volumelb} given below and proved in \Cref{sec:deferred_proofs}.

\begin{restatable}{claim}{volumeub}
  \label{claim:volumeub}
  If every column of a matrix $B$ has Euclidean norm at most $L$, then
  $\det(B^\T B) \leq L^{2r}$, where $r$ is the number of columns of $B$.
\end{restatable}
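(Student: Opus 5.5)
The claim to prove is that $\det(B^\T B) \le L^{2r}$ whenever $B$ has $r$ columns, each of Euclidean norm at most $L$. This is Hadamard's inequality for the Gram matrix, and I would derive it from the Gram--Schmidt (equivalently, QR) decomposition.

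First, dispose of the degenerate case. If the columns $b_1,\dotsc,b_r$ of $B$ are linearly dependent, then $B^\T B$ is singular. Hence $\det(B^\T B) = 0 \le L^{2r}$ and there is nothing to prove.

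Otherwise the columns are linearly independent, and I would write $B = QU$. Here $Q$ has $r$ orthonormal columns, so $Q^\T Q = I_r$, and $U$ is an $r\times r$ upper triangular matrix. Its diagonal entries satisfy $\abs{U_{ii}} = \norm{b_i - \Pi_{i-1} b_i}$, where $\Pi_{i-1}$ denotes orthogonal projection onto $\Span\set{b_1,\dotsc,b_{i-1}}$. Then
\begin{equation*}
  B^\T B = U^\T Q^\T Q U = U^\T U ,
\end{equation*}
so
\begin{equation*}
  \det(B^\T B) = \det(U)^2 = \prod_{i=1}^r U_{ii}^2 .
\end{equation*}

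For each factor, orthogonal projection does not increase length. Applying this to the projection $I - \Pi_{i-1}$ gives $\abs{U_{ii}} \le \norm{b_i} \le L$. Multiplying over $i = 1,\dotsc,r$ gives $\det(B^\T B) \le L^{2r}$.

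There is no real obstacle here. The only step needing care is justifying $\det(B^\T B) = \prod_i U_{ii}^2$ via the identity $Q^\T Q = I_r$, which holds because the columns of $Q$ are orthonormal, even though $Q$ itself need not be square.
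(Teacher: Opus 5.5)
Your proof is correct and takes the same route as the paper, which simply cites Hadamard's inequality (in the Gram-matrix form $\det(B^\T B) \le \prod_i \norm{b_i}^2$). Your QR/Gram--Schmidt argument is the standard proof of that inequality, written out in full.
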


\begin{restatable}{claim}{volumelb}
  \label{claim:volumelb}
  If every entry of a matrix $A$ is an integer,
  and $A$ has full column rank, then $\det(A^\T A) \geq 1$.
\end{restatable}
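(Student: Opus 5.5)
The statement to prove is \Cref{claim:volumelb}: if $A$ has integer entries and full column rank, then $\det(A^\T A) \geq 1$. The plan is to show that $\det(A^\T A)$ is a positive integer.

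First, integrality. Since every entry of $A$ is an integer, every entry of $A^\T A$ is an integer, namely an inner product of two integer columns. The determinant of an integer matrix is a polynomial with integer coefficients in its entries (Leibniz expansion). Hence $\det(A^\T A) \in \Z$.

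Second, positivity. Let $A$ have $r$ columns. Full column rank means $Ac \neq 0$ for every nonzero $c \in \R^r$. Therefore
\begin{equation*}
  c^\T A^\T A c = \norm{Ac}^2 > 0 \qquad \text{for all nonzero } c \in \R^r ,
\end{equation*}
so $A^\T A$ is symmetric positive definite. All its eigenvalues are then positive, and so is their product, $\det(A^\T A)$.

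A positive integer is at least $1$, which proves the claim. An alternative route is the Cauchy--Binet formula, $\det(A^\T A) = \sum_S \det(A_S)^2$, where the sum runs over the $r \times r$ row-submatrices $A_S$. Each $\det(A_S)$ is an integer, and full column rank guarantees at least one of them is nonzero, giving the same bound. No step is a real obstacle; the only point needing care is the positivity, which the full-rank hypothesis supplies.
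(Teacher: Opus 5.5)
Your proof is correct and is essentially the paper's argument: integrality of $\det(A^\T A)$ from the integer-coefficient determinant expansion, and positivity from $A^\T A$ being positive definite under full column rank. The Cauchy--Binet alternative you mention is also valid and reaches the same bound directly.
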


We now prove \Cref{lem:triangular}.

\begin{proof}[Proof of \Cref{lem:triangular}]
  We analyze a process that considers the (non-zero) vectors $u_1,\dotsc,u_N$ in order and produces two sequences of matrices $(A_i)_{i=1}^N$ and $(B_i)_{i=1}^N$.
  For each $i \in [N]$, let $U_i := \set{u_1,\dotsc,u_i}$, $S_i := \Span(U_i)$, and $r_i := \dim(S_i)$, so $1 = r_1 \leq \dotsb \leq r_N = R$.
  The process is shown \Cref{fig:basis_process}.

  \begin{figure}[t]
    \fbox{%
      \parbox{\linewidth}{%
        \begin{itemize}[topsep=0pt,leftmargin=0pt]
          \item[] Let $A_1 := \sbr{u_1}$ and $B_1 := \sbr{u_1}$.
          \item[] For each $i = 2,\dotsc,N$:
            \begin{itemize}
              \item If $u_i \notin S_{i-1}$ (so $S_{i-1} \neq S_i = \Span(S_{i-1} \cup \set{u_i})$, $r_i = r_{i-1} + 1$), then: \hfill (\emph{rank-increasing step})
                \begin{itemize}
                  \item Let $A_i := \sbr{ A_{i-1} \mid u_i }$ and $B_i := \sbr{ B_{i-1} \mid u_i }$.

                \end{itemize}
              \item Else (so $u_i \in S_{i-1} = S_i$, $r := r_i = r_{i-1}$): \hfill (\emph{rank-preserving step})
                \begin{itemize}
                  \item Let $A_i := A_{i-1}$.

                  \item Let $b_1,\dotsc,b_r$ denote the columns of $B_{i-1}$.

                  \item Let $(t_1,\dotsc,t_r) \in \R^r$ be coefficients such that $u_i = t_1 b_1 + \dotsb + t_r b_r$.

                  \item Pick any $k \in \argmax_{j \in [r]} \abs{t_j}$.

                  \item Let $B_i := \sbr{ b_1 \mid \dotsb \mid b_{k-1} \mid u_i \mid b_{k+1} \mid \dotsb \mid b_r }$.

                \end{itemize}

            \end{itemize}

        \end{itemize}%
      }%
    }
    \caption{The process analyzed in the proof of \Cref{lem:triangular}.}
    \label{fig:basis_process}
  \end{figure}

  The definition of $A_i$ in for-loop step $i$ ensures (i) every column in $A_i$ is a vector from $U_i$, and (ii) the columns of $A_i$ form an ordered basis for $S_i$; the same is true for $B_i$.
  Because the columns of $A_i$ and the columns of $B_i$ form bases for the same subspace, there is a unique (and invertible) $r_i \times r_i$ change-of-basis matrix $C_i$ such that $B_i = A_i C_i$.
  The same is true for $i=1$, with $C_1 := \sbr{1}$.
  The determinant of a product of square matrices is the product of their determinants, so
  \begin{equation}
    \label{eq:change_of_basis_det}
    \det(B_i^\T B_i)
    = \det(A_i^\T A_i) \det(C_i)^2
    .
  \end{equation}
  With the $i=N$ case of \Cref{eq:change_of_basis_det} and the fact that $\det(C_1) = 1$, we obtain the telescoping identity:
  \begin{equation}
    \label{eq:det_telescoping}
    \det(B_N^\T B_N)
    = \det(A_N^\T A_N) \prod_{i=2}^N \del*{ \frac{\det(C_i)}{\det(C_{i-1})} }^2 .
  \end{equation}

  We next analyze the ratios $\det(C_i)/\det(C_{i-1})$.
  Suppose step $i$ is a rank-increasing step.
  Then
  \begin{equation*}
    A_i
    = \sbr{ A_{i-1} \mid u_i }
    \qquad \text{and} \qquad
    B_i
    = \sbr{ B_{i-1} \mid u_i }
    = \sbr{ A_{i-1} C_{i-1} \mid u_i }
    =
    \underbrace{
      \sbr{ A_{i-1} \mid u_i }
    }_{A_i}
    \begin{bmatrix}
      C_{i-1} & \\
              & 1
    \end{bmatrix}
    .
  \end{equation*}
  Therefore
  \begin{equation*}
    C_i
    =
    \begin{bmatrix}
      C_{i-1} & \\
              & 1
    \end{bmatrix}
    ,
  \end{equation*}
  which implies
  \begin{equation}
    \label{eq:det_ratio_constant}
    \del*{ \frac{\det(C_i)}{\det(C_{i-1})} }^2 = 1 .
  \end{equation}

  Now suppose instead that step $i$ is a rank-preserving step.
  Adopt the notations $r$, $b_1,\dotsc,b_r$, $k$, and $t := (t_1,\dotsc,t_r)$ from \Cref{fig:basis_process}.
  Since $B_i$ swaps out $b_k$ for $u_i = t_1 b_1 + \dotsb + t_r b_r$ in $B_{i-1} = \sbr{b_1 \mid \dotsb \mid b_r}$, we can write $B_i = B_{i-1} T$, where
  \begin{equation*}
    T := 
    \left[
      \begin{array}{ccc}
        I_{k-1} & \vert   & \\
                & t          & \\
                & \vert & I_{r-k}
      \end{array}
    \right]
  \end{equation*}
  is the $r \times r$ matrix obtained by replacing the $k$-th column of the $r \times r$ identity matrix by $t$ (as a column vector).
  Since we also have $A_i = A_{i-1}$,
  \begin{equation*}
    B_i
    = B_{i-1} T
    = (A_{i-1} C_{i-1}) T
    = A_i (C_{i-1} T) ,
  \end{equation*}
  which implies $C_i = C_{i-1} T$, and hence $\det(C_i) = \det(C_{i-1}) \det(T)$.
  The Laplace expansion of $\det(T)$ along the $k$-th column of $T$ implies that $\det(T) = t_k$.
  Therefore
  \begin{equation}
    \label{eq:det_ratio}
    \del*{ \frac{\det(C_i)}{\det(C_{i-1})} }^2 = t_k^2 .
  \end{equation}
  Now we show a lower bound on $\abs{t_k}$.
  By linearity, the triangle inequality, and the choice of $k$,
  \begin{align}
    \abs{\ip{w_i,u_i}}
    & = \abs{t_1 \ip{w_i,b_1} + \dotsb + t_r \ip{w_i,b_r}}
    \nonumber \\
    & \leq \abs{t_1} \cdot \abs{\ip{w_i,b_1}} + \dotsb + \abs{t_r} \cdot \abs{\ip{w_i,b_r}}
    \nonumber \\
    & \leq \abs{t_k} \cdot \del*{ \abs{\ip{w_i,b_1}} + \dotsb + \abs{\ip{w_i,b_r}} }
    .
    \label{eq:abs_linear_bound}
  \end{align}
  Using the assumptions on $w_i$ and $u_1,\dotsc,u_i$, and the fact that every column of $B_{i-1}$ comes from $U_{i-1}$, it follows that
  \begin{equation}
    \label{eq:linear_bound}
    \ip{w_i,u_i} = 1
    \qquad\text{and}\qquad
    \abs{\ip{w_i,b_1}} + \dotsb + \abs{\ip{w_i,b_r}} \leq r\epsilon .
  \end{equation}
  Therefore, combining \Cref{eq:abs_linear_bound,eq:linear_bound} gives $\abs{t_k} \geq 1/(r\epsilon)$, and hence by \Cref{eq:det_ratio},
  \begin{equation}
    \label{eq:det_ratio_increase}
    \del*{ \frac{\det(C_i)}{\det(C_{i-1})} }^2 \geq \del*{ \frac1{r\epsilon} }^2 .
  \end{equation}

  Now we return to the telescoping identity \Cref{eq:det_telescoping}.
  For each $r \in [R]$, let $N_r$ denote the number of rank-preserving steps $i$ with $r_i = r_{i-1} = r$.
  Then by \Cref{eq:det_ratio_constant,eq:det_ratio_increase},
  \begin{equation*}
    \prod_{i=2}^N \del*{ \frac{\det(C_i)}{\det(C_{i-1})} }^2
    \geq \prod_{r=1}^R \del*{ \frac1{r\epsilon} }^{2N_r} .
  \end{equation*}
  Moreover, by \Cref{claim:volumeub} and \Cref{claim:volumelb},
  \begin{equation*}
    \det(B_N^\T B_N) \leq L^{2R}
    \qquad\text{and}\qquad
    \det(A_N^\T A_N) \geq 1
    .
  \end{equation*}
  Combining the inequalities in these last two displays with \Cref{eq:det_telescoping} gives
  \begin{equation*}
    L^{2R}
    \geq
    \det(B_N^\T B_N)
    = \det(A_N^\T A_N) \prod_{i=2}^N \del*{ \frac{\det(C_i)}{\det(C_{i-1})} }^2
    \geq \prod_{r=1}^R \del*{ \frac1{r\epsilon} }^{2N_r}
    .
  \end{equation*}
  Since $\log\del{\tfrac1{r\epsilon}} \geq \log\del{\tfrac1{R\epsilon}} > 0$ for all $r \in [R]$ by the assumption $\epsilon \in \intoo{0,1/R}$, taking logarithms and simplifying gives \Cref{eq:main}.
  We conclude that the total number of for-loop steps, $N-1$, satisfies
  \begin{align*}
    N-1
    & =
    \text{(number of rank-increasing steps)}
    + \text{(number of rank-preserving steps)}
    \\
    & = \del*{ R-1 } + \del*{ N_1 + \dotsb + N_R }
    \\
    & \leq R-1
    + \frac1{\log\del*{\frac1{R\epsilon}}} \sum_{r=1}^R N_r \log\del*{\frac1{r\epsilon}}
    \\
    & \leq R-1 + \frac{R\log(L)}{\log\del*{\frac1{R\epsilon}}}
    \qquad \text{(by \Cref{eq:main})} .
    \qedhere
  \end{align*}
\end{proof}

\subsection{Finishing the proof}

We now finish the proof of \Cref{thm:minmax}.

\begin{proof}[Proof of \Cref{thm:minmax}]
  We may assume that $v_i \neq 0$ for all $i \in [M]$.
  This is because if the input has $x_j = i$ for all $j \in [n]$, then $\Att (\ip{q,k_{x_j}},v_{x_j})_{j=1}^n = v_i$, which cannot be zero if $f^{\min}$ and $f^{\max}$ are to output non-$\bot$ values.

  We apply \Cref{lem:monotonic_subsequence} to obtain $(x_1,\dotsc,x_N) \in [M]^N$ for
  \begin{equation*}
    N := \min\set*{ \floor*{1 + \sqrt{M-1}}, \floor*{\frac{n-1}{2}} }
  \end{equation*}
  with $x_1 < \dotsb < x_N$ and $(\alpha_{x_j})_{j=1}^N$ either non-decreasing or non-increasing.
  If $(\alpha_{x_j})_{j=1}^N$ is non-decreasing, then we apply \Cref{lem:min_triangular_obstruction}, and set $u_i := 2^pv_{x_{N-i+1}}$ and (for $i \geq 2$) $w_i := 2^{-p}\lambda_{N-i+1}$.
  If $(\alpha_{x_j})_{j=1}^N$ is non-increasing, then we apply \Cref{lem:max_triangular_obstruction}, and set $u_i := 2^pv_{x_i}$ and (for $i \geq 2$) $w_i := 2^{-p}\mu_i$.
  In either case, we obtain non-zero vectors $u_1,\dotsc,u_N, w_2,\dotsc,w_N$ satisfying the preconditions of \Cref{lem:triangular} with $L := 2^p\ell$ and $\epsilon := 1/(n-1)$.
  The span of $u_1,\dotsc,u_N$ has dimension
  \begin{equation}
    \label{eq:rank_bound}
    R \leq \min\set*{d, \floor*{1 + \sqrt{M-1}}, \floor*{\frac{n-1}{2}} } ,
  \end{equation}
  and hence
  \begin{equation}
    \label{eq:growth_factor_bound}
    \log\frac{1}{R\epsilon} \geq \log(2)
    .
  \end{equation}
  Combining \Cref{eq:rank_bound,eq:growth_factor_bound} with \Cref{lem:triangular}, we have
  \begin{equation*}
    N
    \leq
    R \del*{ 1 + \frac{\log(L)}{\log\del*{\frac1{R\epsilon}}} }
    \leq
    d \del*{ 1 + \log_2(L) }
    =
    d \del*{ 1 + p + \log_2(\ell) }
    .
    \qedhere
  \end{equation*}
\end{proof}

\section{Proofs of \Cref{thm:ptf_lb,thm:ptf_ub}}
\label{sec:ptf_proof}

In this \namecref{sec:ptf_proof}, we prove \Cref{thm:ptf_lb,thm:ptf_ub}.

\subsection{Notation}

Throughout this \namecref{sec:ptf_proof}, we use the notations
\begin{equation}
  \label{eq:att_param}
  \ip{q,k_0}, \ip{q,k_1} \in \R \quad \text{and} \quad v_0, v_1 \in \R^d
\end{equation}
to denote the parameters for a single attention head, which on input $(x_1,\dotsc,x_n) \in \set{0,1}^n$, computes
\begin{equation*}
  \Att (\ip{q,k_{x_j}},v_{x_j})_{j=1}^n .
\end{equation*}
And we use the notations
\begin{equation}
  \label{eq:layer_param}
  \ip{q^{(h)},k_0^{(h)}}, \ip{q^{(h)},k_1^{(h)}} \in \R \quad \text{and} \quad v_0^{(h)}, v_1^{(h)} \in \R^d , \qquad \forall h \in [H]
\end{equation}
to denote the parameters for an $H$-head attention layer, which on input $(x_1,\dotsc,x_n) \in \set{0,1}^n$, computes
\begin{equation*}
  \sum_{h=1}^H \Att (\ip{q^{(h)},k_{x_j}^{(h)}},v_{x_j}^{(h)})_{j=1}^n .
\end{equation*}

\subsection{Rational representations of attention layers}
\label{sec:ptf_lb}

We first show \Cref{thm:ptf_lb}.
The proof begins with the standard observation that the output of an $H$-head attention layer on any input $x \in \set{0,1}^n$ is given by a rational function $P(x) / Q(x)$, where $P$ is an $\R^d$-valued polynomial of degree at most $H$, and $Q$ is a polynomial of degree at most $H$ that is positive on $\set{0,1}^n$.

\begin{lemma}
  \label{lem:rational_representation}
  Consider any $H$ attention heads with parameters \Cref{eq:layer_param} (and value vector dimension $d$).
  There exist an $\R^d$-valued $n$-variate polynomial $P$ of degree at most $H$, and an $n$-variate polynomial $Q$ of degree at most $H$ such that $Q(x_1,\dotsc,x_n) > 0$ and $\sum_{h=1}^H \Att (\ip{q^{(h)}, k_{x_j}^{(h)}},v_{x_j}^{(h)})_{j=1}^n = P(x_1,\dotsc,x_n) / Q(x_1,\dotsc,x_n)$ for all $(x_1,\dotsc,x_n) \in \set{0,1}^n$.
\end{lemma}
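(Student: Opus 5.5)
We will first handle a single head and then combine the $H$ heads over a common denominator. Fix one head with parameters \Cref{eq:att_param}, and write $\alpha_0 := \exp(\ip{q,k_0}) > 0$ and $\alpha_1 := \exp(\ip{q,k_1}) > 0$. On input $x \in \set{0,1}^n$ with weight $\abs{x} = \sum_j x_j$, exactly $\abs{x}$ positions carry token $1$ and $n - \abs{x}$ positions carry token $0$. The attention output is therefore
\begin{equation*}
  \frac{(n-\abs{x})\alpha_0 v_0 + \abs{x}\alpha_1 v_1}{(n-\abs{x})\alpha_0 + \abs{x}\alpha_1} .
\end{equation*}
Substituting $\abs{x} = \sum_j x_j$, the numerator $p(x) := n\alpha_0 v_0 + (\alpha_1 v_1 - \alpha_0 v_0)\sum_j x_j$ is an $\R^d$-valued affine polynomial. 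The denominator $q(x) := n\alpha_0 + (\alpha_1 - \alpha_0)\sum_j x_j$ is a real affine polynomial. It is positive on $\set{0,1}^n$, being a sum of $n$ strictly positive terms $\alpha_{x_j}$. So each head equals $p^{(h)}(x)/q^{(h)}(x)$ with both numerator and denominator of degree at most $1$, and $q^{(h)} > 0$ on the cube.

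Next, we sum the heads over a common denominator. Define $Q := \prod_{h=1}^H q^{(h)}$ and $P := \sum_{h=1}^H p^{(h)} \prod_{h' \neq h} q^{(h')}$. Then $Q$ is a product of $H$ affine polynomials, so it has degree at most $H$. It is positive on $\set{0,1}^n$ because each factor is. Each summand of $P$ is a product of one $\R^d$-valued affine polynomial and $H-1$ real affine polynomials, so each summand, and hence $P$ itself (componentwise), has degree at most $H$. For every $x \in \set{0,1}^n$, the layer output is $\sum_h p^{(h)}(x)/q^{(h)}(x) = P(x)/Q(x)$. This is the required representation.

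There is no real obstacle here. The only point needing care is that positivity of $Q$ is needed only on the Boolean cube, not on all of $\R^n$, and that positivity on the cube follows from each factor being a sum of the positive weights $\alpha_{x_j}$. (One could multilinearize $P$ and $Q$ on the cube, but that is unnecessary, since degree only decreases under multilinearization.)
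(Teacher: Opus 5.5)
Your proof is correct and follows the paper's argument: each head is a ratio of affine functions whose denominator is a sum of positive weights on the cube, and you combine the $H$ heads over the common denominator $\prod_h q^{(h)}$ with numerator $\sum_h p^{(h)} \prod_{h' \neq h} q^{(h')}$. The only cosmetic difference is that you write each head's affine numerator and denominator in terms of $\abs{x}$, whereas the paper writes them as sums of $(1-x_i)$ and $x_i$ terms.
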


\begin{proof}
  Consider a single attention head with parameters \Cref{eq:att_param}.
  For any input $(x_1,\dotsc,x_n) \in \set{0,1}^n$,
  \begin{equation*}
    \Att (\ip{q,k_{x_j}},v_{x_j})_{j=1}^n
    = \frac{A(x_1,\dotsc,x_n)}{Z(x_1,\dotsc,x_n)} ,
  \end{equation*}
  where we define $A \colon \R^n \to \R^d$ and $Z \colon \R^n \to \R$ by
  \begin{align*}
    A(x_1,\dotsc,x_n)
    & := \sum_{i=1}^n (1 - x_i) \exp(\ip{q, k_0}) v_0 + x_i \exp(\ip{q, k_1}) v_1
    \intertext{and}
    Z(x_1,\dotsc,x_n)
    & := \sum_{i=1}^n (1 - x_i) \exp(\ip{q, k_0}) + x_i \exp(\ip{q, k_1})
    .
  \end{align*}
  Observe that $A$ and $Z$ are affine functions of $(x_1,\dotsc,x_n)$, with $Z$ taking positive values on $\set{0,1}^n$.

  Therefore, the sum of $H$ attention heads can be written as
  \begin{equation*}
    \sum_{h=1}^H \Att (\ip{q^{(h)},k_{x_j}^{(h)}},v_{x_j}^{(h)})_{j=1}^n
    =
    \sum_{h=1}^H \frac{A^{(h)}(x_1,\dotsc,x_n)}{Z^{(h)}(x_1,\dotsc,x_n)}
  \end{equation*}
  where $A^{(1)},\dotsc,A^{(H)} \colon \R^n \to \R^d$
  and $Z^{(1)},\dotsc,Z^{(H)} \colon \R^n \to \R$
  are all affine functions, with each of $Z^{(1)},\dotsc,Z^{(H)}$ taking positive values on $\set{0,1}^n$.
  Define $Z^{(1:H)} := \prod_{h=1}^H Z^{(h)}$, which takes positive values on $\set{0,1}^n$; and for each $h \in [H]$ , define $Z^{(-h)} := \prod_{h'\neq h} Z^{(h')}$.
  Then we have
  \begin{equation}
    \label{eq:rational}
    \sum_{h=1}^H \frac{A^{(h)}(x_1,\dotsc,x_n)}{Z^{(h)}(x_1,\dotsc,x_n)}
    = 
    \frac{\sum_{h=1}^H A^{(h)}(x_1,\dotsc,x_n) Z^{(-h)}(x_1,\dotsc,x_n)}{Z^{(1:H)}(x_1,\dotsc,x_n)} 
    .
  \end{equation}
  For each $h \in [H]$, $Z^{(-h)}$ is the product of $H-1$ affine functions and hence is a polynomial of degree at most $H-1$.
  Therefore, each of the numerator and denominator in the right-hand side expression in \Cref{eq:rational} is a polynomial of degree at most $H$.
\end{proof}

We next consider the effect of composing a polynomial function with a rational function whose denominator is positive over a domain.

\begin{lemma}
  \label{lem:poly_rational_composition}
  Suppose
  $g$ is a $d$-variate polynomial of degree at most $D$,
  $P$ is an $n$-variate $\R^d$-valued polynomial of degree at most $H$, and
  $Q$ is an $n$-variate polynomial of degree at most $H$ with $Q(x) > 0$ for all $x \in \set{0,1}^n$.
  Then there exists an $n$-variate polynomial $R$ of degree at most $DH$ such that, for all $x \in \set{0,1}^n$, $\sign(g(P(x)/Q(x))) = \sign(R(x))$.
\end{lemma}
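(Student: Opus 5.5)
The plan is to clear denominators by homogenizing $g$ with respect to $Q$. Write $g = \sum_{m=0}^{D} g_m$, where $g_m$ is the homogeneous part of $g$ of degree exactly $m$; this is a $d$-variate polynomial in the coordinates $z = (z_1,\dotsc,z_d)$. Then define the candidate
\begin{equation*}
  R(x) := Q(x)^{D} \, g\del*{ P(x)/Q(x) } = \sum_{m=0}^{D} Q(x)^{D-m} \, g_m\del*{ P(x) } ,
\end{equation*}
where the second expression uses homogeneity: $g_m(P/Q) = Q^{-m} g_m(P)$. The right-hand side is visibly a polynomial in $x$ (no division), so it can serve as the definition of $R$ as an honest $n$-variate polynomial. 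The identity with the left-hand expression holds at every $x$ with $Q(x) \neq 0$, and in particular on all of $\set{0,1}^n$.

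Next I would check the degree. Each coordinate of $P$ has degree at most $H$, so $g_m(P(x))$, a sum of monomials of total degree $m$ in the coordinates of $P$, has degree at most $mH$. Also $Q^{D-m}$ has degree at most $(D-m)H$. Each summand therefore has degree at most $DH$, and so does $R$.

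Finally, for the sign claim, fix $x \in \set{0,1}^n$. Since $Q(x) > 0$, we have $Q(x)^D > 0$, so $\sign(R(x)) = \sign(Q(x)^D) \cdot \sign(g(P(x)/Q(x))) = \sign(g(P(x)/Q(x)))$. This also covers the zero case consistently: $R(x) = 0$ exactly when $g(P(x)/Q(x)) = 0$.

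There is no real obstacle here. The only point needing care is using the common power $Q^D$, rather than $Q^{\deg}$ of each monomial, so that a single positive factor multiplies the whole of $g$ and the sign is preserved. One could instead let $D$ be the exact degree of $g$; the bound "at most $D$" already suffices.
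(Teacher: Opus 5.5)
Your proposal is correct and is essentially the paper's proof. The paper also defines $R(x) := Q(x)^D g(P(x)/Q(x))$ and expands it as $\sum_{K} c_K\, Q(x)^{D-\lvert K\rvert} \prod_j P_j(x)^{K_j}$; it bounds each term's degree by $(D-\lvert K\rvert)H + \lvert K\rvert H = DH$ and uses $Q(x)^D > 0$ on $\{0,1\}^n$ for the sign, so your grouping by homogeneous components $g_m$ only regroups the same monomials.
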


\begin{proof}
  For each $j \in [d]$, write $P_j$ for the $j$-th component of $P$.
  For a multi-index $K = (K_1,\dotsc,K_d) \in \N_0^d := \Set{ (z_1,\dotsc,z_d) \given z_i \in \Z, z_i \geq 0 \, \forall i \in [d] }$, write $\abs{K} := \sum_{j=1}^d K_j$.
  Since $g$ has degree at most $D$, we can write
  \begin{equation*}
    g(z_1,\dotsc,z_d) = \sum_{\substack{K \in \N_0^d : \\ \abs{K} \leq D}} c_K \prod_{j=1}^d z_j^{K_j}
  \end{equation*}
  for some real coefficients $c_K$.
  Formally define $R(x) := Q(x)^D g(P(x)/Q(x))$, which after expanding the expression for $g$ becomes
  \begin{equation*}
    R(x)
    = Q(x)^D
    \sum_{\substack{K \in \N_0^d : \\ \abs{K} \leq D}} c_K \prod_{j=1}^d (P_j(x)/Q(x))^{K_j}
    =
    \sum_{\substack{K \in \N_0^d : \\ \abs{K} \leq D}} c_K
    Q(x)^{D - \abs{K}}
    \prod_{j=1}^d P_j(x)^{K_j} 
    .
  \end{equation*}
  The term in the final sum corresponding to multi-index $K$ has degree
  \begin{equation*}
    \deg\del*{ Q(x)^{D-\abs{K}} \prod_{j=1}^d P_j(x)^{K_j} }
    \leq (D - \abs{K}) \deg(Q) + \sum_{j=1}^d K_j \deg(P_j)
    \leq (D - \abs{K}) H + \abs{K} H = DH .
  \end{equation*}
  Therefore $R$ is an $n$-variate polynomial with $\deg(R) \leq DH$.
  Moreover, since $Q(x)^D > 0$ for all $x \in \set{0,1}^n$, we have
  \begin{equation*}
    \sign(g(P(x)/Q(x)))
    = \sign(Q(x)^D g(P(x)/Q(x)))
    = \sign(R(x))
    \quad \forall x \in \set{0,1}^n .
    \qedhere
  \end{equation*}
\end{proof}

Combining \Cref{lem:rational_representation,lem:poly_rational_composition} shows that if the composition of a degree-$D$ polynomial $g$ and an $H$-head attention layer sign-represents $f$ on $\set{0,1}^n$, then there is a degree-$DH$ polynomial $R$ that sign-represents $f$ as well.
This implies that $D \times H \geq T$, proving \Cref{thm:ptf_lb}.

\subsection{Realizing a polynomial threshold function with attention heads}

The remainder of \Cref{sec:ptf_proof} is dedicated to proving \Cref{thm:ptf_ub}.
In \Cref{lem:att_realization}, we show a basic computation achievable by a single attention head, which will serve as the basis for richer computations achievable by multiple attention heads.

\begin{lemma}
  \label{lem:att_realization}
  For any vectors $a, b \in \R^d$ and scalar $c > -1$, there is an attention head with parameters \Cref{eq:att_param} and value vector dimension $d$, such that for all $x = (x_1,\dotsc,x_n) \in \set{0,1}^n$,
  \begin{equation*}
    \Att (\ip{q,k_{x_j}},v_{x_j})_{j=1}^n
    = a + \frac1{n + c \abs{x}} b .
  \end{equation*}
\end{lemma}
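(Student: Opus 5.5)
We choose the two attention weights and the two value vectors directly. Write $\alpha_0 := \exp(\ip{q,k_0})$ and $\alpha_1 := \exp(\ip{q,k_1})$. On an input with weight $\abs{x}$, the head has $n-\abs{x}$ tokens equal to $0$ and $\abs{x}$ tokens equal to $1$. It therefore outputs
\begin{equation*}
  \frac{(n-\abs{x})\alpha_0 v_0 + \abs{x}\alpha_1 v_1}{(n-\abs{x})\alpha_0 + \abs{x}\alpha_1}.
\end{equation*}
Take $\ip{q,k_0} := 0$, so $\alpha_0 = 1$, and $\ip{q,k_1} := \ln(1+c)$, so $\alpha_1 = 1+c$; this is legitimate because $c > -1$. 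The denominator then becomes $n + c\abs{x}$, which is positive for every $\abs{x} \in \set{0,\dotsc,n}$ since $n+c\abs{x} \geq \min\set{n, n(1+c)} > 0$.

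The numerator is affine in $\abs{x}$, namely $n v_0 + \abs{x}\del{(1+c)v_1 - v_0}$. We want it to equal $a(n+c\abs{x}) + b$ identically in $\abs{x}$. Matching the constant and linear coefficients gives two conditions:
\begin{equation*}
  n v_0 = n a + b
  \qquad\text{and}\qquad
  (1+c)v_1 - v_0 = c\,a .
\end{equation*}
Set $v_0 := a + b/n$. Then $(1+c)v_1 = c a + a + b/n = (1+c)a + b/n$, so $v_1 := a + \frac{b}{n(1+c)}$. Again $1+c > 0$ makes this well defined.

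With these choices the head outputs $\frac{a(n+c\abs{x}) + b}{n+c\abs{x}} = a + \frac{b}{n+c\abs{x}}$ for every $x \in \set{0,1}^n$, which is the claimed identity. As a quick sanity check, $\abs{x}=0$ gives $v_0$ and $\abs{x}=n$ gives $v_1$, as expected.

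There is no real obstacle here. The only points needing care are that the attention weight on token $1$ is positive and that the denominator never vanishes, and both follow from $c>-1$. The query/key values are realizable with one-dimensional query and key vectors, for example $q=1$, $k_0 = 0$, $k_1 = \ln(1+c)$.
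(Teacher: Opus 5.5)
Your proposal is correct and uses exactly the paper's construction: $\ip{q,k_0} := 0$, $\ip{q,k_1} := \ln(1+c)$, $v_0 := a + b/n$, $v_1 := a + \frac{b}{n(1+c)}$. The only difference is in the verification. You match the constant and linear coefficients of the affine numerator, whereas the paper subtracts $a$ from both value vectors and checks directly that the weighted sum equals $b$ and the normalizer equals $n + c\abs{x}$.
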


\begin{proof}
  Set $\ip{q,k_0} := 0$, $\ip{q,k_1} := \ln(1+c)$, $v_0 := a + \frac1nb$, $v_1 := a + \frac{1}{n(1+c)}b$.
  Then
  \begin{align*}
    \sum_{j=1}^n
    \del*{ (1-x_j) \exp(\ip{q,k_0}) (v_0 - a) + x_j \exp(\ip{q,k_1}) (v_1 - a) }
    & = \sum_{j=1}^n
    \del*{ \frac{(1-x_j) b}{n} + \frac{x_j (1+c) b}{n(1+c)} }
    = b
    \intertext{and}
    \sum_{j=1}^n
    \del*{ (1-x_j) \exp(\ip{q,k_0}) + x_j \exp(\ip{q,k_1}) }
    & = \sum_{j=1}^n \del*{ (1-x_j) + x_j (1+c) }
    = n + c \abs{x} .
  \end{align*}
  Therefore
  \begin{equation*}
    \Att (\ip{q,k_{x_j}},v_{x_j})_{j=1}^n
    = a + \Att (\ip{q,k_{x_j}},v_{x_j} - a)_{j=1}^n
    = a + \frac{1}{n+c\abs{x}} b .
    \qedhere
  \end{equation*}
\end{proof}

In \Cref{lem:poly_basis}, we give a standard construction of a basis for the space of bounded degree polynomials using products of distinct affine functions.
We defer the proof to \Cref{sec:deferred_proofs}.

\begin{restatable}{lemma}{polybasis}
  \label{lem:poly_basis}
  Fix any non-zero scalar $n$ and distinct non-zero scalars $c_1, \dotsc, c_H$, and define the univariate polynomials
  \begin{equation}
    \label{eq:poly_basis}
    Q_0(t) := \prod_{h=1}^H (n + c_ht) ,
    \qquad
    Q_h(t) := \prod_{h' \neq h} (n + c_{h'}t) \quad \forall h \in [H] .
  \end{equation}
  Then $Q_0,Q_1,\dotsc,Q_H$ form a basis for the vector space of polynomials of degree at most $H$.
\end{restatable}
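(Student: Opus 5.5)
The plan is a dimension count: there are $H+1$ polynomials $Q_0,\dotsc,Q_H$, and the space of univariate polynomials of degree at most $H$ also has dimension $H+1$. So it suffices to show that $Q_0,\dotsc,Q_H$ are linearly independent. First I will check that each lies in the space: $Q_0$ is a product of $H$ affine factors, so $\deg Q_0 \le H$. Each $Q_h$ with $h\ge1$ is a product of $H-1$ affine factors, so $\deg Q_h \le H-1$.

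For independence, suppose $\sum_{h=0}^H \gamma_h Q_h = 0$ identically. Evaluate at the roots $t_h := -n/c_h$ for $h\in[H]$; these are well defined because $c_h\neq0$, and they are pairwise distinct because the $c_h$ are distinct and $n\neq 0$. At $t=t_h$, the polynomial $Q_0$ vanishes because it contains the factor $n+c_ht$. For $h'\neq h$, $h'\ge1$, the polynomial $Q_{h'}$ also contains the factor $(n+c_ht)$, so $Q_{h'}(t_h)=0$. The only surviving term is $\gamma_h Q_h(t_h)$, and
\[
Q_h(t_h)=\prod_{h'\neq h}\bigl(n - c_{h'}n/c_h\bigr)=\prod_{h'\neq h} \frac{n(c_h-c_{h'})}{c_h}\neq 0,
\]
using distinctness of the $c$'s and $n\neq0$. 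Hence $\gamma_h=0$ for every $h\in[H]$.

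It remains to show $\gamma_0=0$, which now reads $\gamma_0 Q_0=0$. Since $Q_0(0)=n^H\neq0$, $Q_0$ is not the zero polynomial, so $\gamma_0=0$. This gives linear independence, and the dimension count then yields that $Q_0,\dotsc,Q_H$ form a basis.

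The only real obstacle is the bookkeeping in the evaluation step: confirming that $Q_0$ and every $Q_{h'}$ with $h'\neq h$ vanish at $t_h$, while $Q_h(t_h)$ does not. Nonvanishing of $Q_h(t_h)$ is exactly where the distinctness of the $c_h$ and the hypothesis $n\neq 0$ are used.
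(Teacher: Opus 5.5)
Your proposal is correct and follows essentially the same argument as the paper: reduce to linear independence by the dimension count, evaluate at $t=-n/c_h$ to isolate each coefficient $\gamma_h$ for $h\in[H]$, and then use that $Q_0$ is not the zero polynomial to get $\gamma_0=0$. Your explicit computation of $Q_h(-n/c_h)$ and your use of $Q_0(0)=n^H\neq0$ simply spell out steps the paper states more tersely.
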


\Cref{lem:attention_layer_realization} is the core of our construction: we show that $H$ attention heads with value vector dimension $d$ can realize the any $d$ polynomials in $\abs{x}$ of degree at most $H$, up to a positive scaling that ultimately does not affect the sign after being post-processed by a homogeneous polynomial.

\begin{lemma}
  \label{lem:attention_layer_realization}
  Suppose $P_1,\dotsc,P_d$ are univariate polynomials, each of degree at most $H$.
  Then there is an $H$-head attention layer with parameters \Cref{eq:layer_param} and value vector dimension $d$, such that for all $x = (x_1,\dotsc,x_n) \in \set{0,1}^n$,
  \begin{equation*}
    \sum_{h=1}^H \Att (\ip{(q^{(h)},k_{x_j}^{(h)}},v_{x_j}^{(h)})_{j=1}^n
    = \frac{(P_1(\abs{x}),\dotsc,P_d(\abs{x}))}{Q_0(\abs{x})} ,
  \end{equation*}
  where $Q_0$ is a real-valued function that is positive on the non-negative reals.
  Furthermore, if $g$ is a homogeneous $d$-variate polynomial, then for all $x = (x_1,\dotsc,x_n) \in \set{0,1}^n$,
  \begin{equation*}
    \sign\del*{
      g\del*{
        \sum_{h=1}^H \Att (\ip{(q^{(h)},k_{x_j}^{(h)}},v_{x_j}^{(h)})_{j=1}^n
      }
    } =
    \sign\del*{ g\del*{P_1(\abs{x}),\dotsc,P_d(\abs{x})} }
    .
  \end{equation*}
\end{lemma}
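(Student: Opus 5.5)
We will build the layer from $H$ heads of the form given by \Cref{lem:att_realization}, each with $a=0$, so that head $h$ outputs $b^{(h)}/(n+c_h\abs{x})$. The plan is to choose distinct positive scalars $c_1,\dotsc,c_H$ (positivity gives $c_h>-1$ and makes every $n+c_ht$ positive for $t\geq 0$). Define $Q_0,Q_1,\dotsc,Q_H$ as in \Cref{eq:poly_basis}. Summing the heads and putting everything over the common denominator $Q_0$ gives
\begin{equation*}
  \sum_{h=1}^H \frac{b^{(h)}}{n+c_h\abs{x}} = \frac{\sum_{h=1}^H b^{(h)} Q_h(\abs{x})}{Q_0(\abs{x})} .
\end{equation*}

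The obstacle is that the numerator lies only in the span of $Q_1,\dotsc,Q_H$, which has dimension $H$. By \Cref{lem:poly_basis}, the full space of polynomials of degree at most $H$ needs $Q_0$ as well, so a constant term is missing. I would handle this with the offset $a$. Give one head (say head $1$) the offset $a=\gamma\in\R^d$. This adds $\gamma = \gamma Q_0(\abs{x})/Q_0(\abs{x})$ to the output, so the numerator becomes $\gamma Q_0 + \sum_h b^{(h)}Q_h$.

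Next, for each coordinate $i\in[d]$, I will use \Cref{lem:poly_basis} to expand $P_i=\gamma_i Q_0+\sum_{h=1}^H b^{(h)}_i Q_h$. This is possible because $n>0$ and the $c_h$ are distinct and non-zero. Taking $\gamma=(\gamma_i)_i$ and $b^{(h)}=(b^{(h)}_i)_i$ then yields the claimed identity, with $Q_0(t)=\prod_h(n+c_ht)>0$ for all $t\geq0$. Each coordinate is handled independently, so the value dimension stays $d$.

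For the sign statement, let $g$ be homogeneous of degree $D'$. Then
\begin{equation*}
  g\del*{P(\abs{x})/Q_0(\abs{x})} = Q_0(\abs{x})^{-D'}\, g\del*{P(\abs{x})} .
\end{equation*}
Since $Q_0(\abs{x})>0$, the two sides have the same sign, which gives the second claim.
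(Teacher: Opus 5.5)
Your proposal is correct and follows essentially the same route as the paper. You expand each $P_i$ in the basis $Q_0,Q_1,\dotsc,Q_H$ from \Cref{lem:poly_basis}, realize each $1/(n+c_h\abs{x})$ term with \Cref{lem:att_realization}, and use homogeneity of $g$ together with $Q_0(\abs{x})>0$ for the sign claim. The only difference is cosmetic: the paper splits the constant offset evenly as $\alpha/H$ across all heads, whereas you put all of it on head $1$, and both choices work equally well.
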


\begin{proof}
  Fix any distinct positive scalars $c_h>0$ for all $h \in [H]$, and let $Q_0, Q_1, \dotsc, Q_H$ be the corresponding polynomials defined in \Cref{eq:poly_basis}, which form an (ordered) basis for the vector space of polynomials of degree at most $H$ as per \Cref{lem:poly_basis}.
  Then for each $i \in [d]$, there exists scalars $\alpha_i, \beta_i^{(1)}, \dotsc, \beta_i^{(H)}$ such that
  \begin{equation*}
    P_i = \alpha_i Q_0  + \sum_{h=1}^H \beta_i^{(h)} Q_h .
  \end{equation*}
  Formally dividing through by $Q_0$ gives
  \begin{equation*}
    \frac{P_i(t)}{Q_0(t)}
    = \alpha_i + \sum_{h=1}^H \frac1{n+c_ht} \beta_i^{(h)}
    = \sum_{h=1}^H \del*{ \frac1H \alpha_i + \frac1{n+c_ht} \beta_i^{(h)} }
    \quad \forall i \in [d] .
  \end{equation*}
  Observe that $Q_0(t) > 0$ for all $t \geq 0$ since $c_h \geq 0$ for all $h \in [H]$.

  Let $\alpha := (\alpha_1,\dotsc,\alpha_d) \in \R^d$ and $\beta^{(h)} := (\beta_1^{(h)},\dotsc,\beta_d^{(h)}) \in \R^d$ for each $h \in [H]$.
  For each $h \in [H]$, we apply \Cref{lem:att_realization} with $a := \alpha / H$, $b := \beta^{(h)}$, $c := c_h > 0$; this provides a construction for $H$ attention heads with parameters \Cref{eq:layer_param} and value vector dimension $d$, such that for all $x = (x_1,\dotsc,x_n) \in \set{0,1}^n$,
  \begin{equation*}
    \Att (\ip{(q^{(h)},k_{x_j}^{(h)}},v_{x_j}^{(h)})_{j=1}^n
    = \frac1H \alpha + \frac1{n + c_h \abs{x}} \beta^{(h)} ,
    \quad \forall h \in [H] .
  \end{equation*}
  Therefore
  \begin{equation*}
    \sum_{h=1}^H \Att (\ip{q^{(h)},k_{x_j}^{(h)}},v_{x_j}^{(h)})_{j=1}^n
    = \alpha + \sum_{h=1}^H \frac1{n + c_h \abs{x}} \beta^{(h)}
    = \frac{(P_1(\abs{x}),\dotsc,P_d(\abs{x}))}{Q_0(\abs{x})} .
  \end{equation*}
  Note that $Q_0(\abs{x})$ is positive because $\abs{x} \geq 0$ for all $x \in \set{0,1}^n$.

  For the final claim (after "Furthermore"), let $D$ denote the degree of $g$, and observe that for any $x = (x_1,\dotsc,x_n) \in \set{0,1}^n$,
  \begin{equation*}
    g\del*{
      \sum_{h=1}^H \Att (\ip{q^{(h)},k_{x_j}^{(h)}},v_{x_j}^{(h)})_{j=1}^n
    }
    =
    g\del*{
      \frac{P_1(\abs{x})}{Q_0(\abs{x})},\dotsc,\frac{P_d(\abs{x})}{Q_0(\abs{x})}
    }
    =
    \frac{g\del{P_1(\abs{x}),\dotsc,P_d(\abs{x})}}{Q_0(\abs{x})^D} ,
  \end{equation*}
  where the last equality follows by homogeneity of $g$.
  Since $Q_0(\abs{x}) > 0$, we have
  \begin{align*}
    \sign\del*{
      g\del*{
        \sum_{h=1}^H \Att (\ip{q^{(h)},k_{x_j}^{(h)}},v_{x_j}^{(h)})_{j=1}^n
      }
    }
    & = \sign\del*{
      \frac{g\del{P_1(\abs{x}),\dotsc,P_d(\abs{x})}}{Q_0(\abs{x})^D}
    }
    \\
    & = \sign\del*{g\del{P_1(\abs{x}),\dotsc,P_d(\abs{x})}}
    .
    \qedhere
  \end{align*}
\end{proof}

\subsection{Degree-restricted compositional sign-representations}

To use \Cref{lem:attention_layer_realization}, we need sign-representations for symmetric Boolean functions that are the form $g(P_1(\abs{x}),\dotsc,P_d(\abs{x}))$ for some polynomials $P_1,\dotsc,P_d$ of degree at most $H$ and a homogeneous $d$-variate polynomial of degree at most $D$.
\Cref{lem:partial_product} shows how to construct such representations with value vector dimension $d = D$ using partial products of the polynomial factorization.

\begin{lemma}
  \label{lem:partial_product}
  Let $F(t)$ be a univariate polynomial of degree $T$.
  Suppose $D$ and $H$ are positive integers with $D \times H \geq T$.
  There exist
  \begin{itemize}
    \item a positive integer $d \leq D$;

    \item a homogeneous $d$-variate polynomial $g$ with degree at most $D$;

    \item $d$ univariate polynomials $P_1(t),\dotsc,P_d(t)$, each of degree at most $H$;

  \end{itemize}
  such that
  \begin{equation*}
    \sign(F(t)) = \sign(g(P_1(t),\dotsc,P_d(t))) .
  \end{equation*}
\end{lemma}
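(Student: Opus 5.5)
Factor $F$ over the reals into pieces of degree at most $H$, put each piece into its own coordinate, and multiply the coordinates back together. Homogeneity is then arranged by letting one coordinate be the constant polynomial $1$.

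\textbf{Step 1: grouping the factors.} By the fundamental theorem of algebra, $F(t) = c \prod_k \phi_k(t)$, where $c \neq 0$ and each $\phi_k$ is either linear or an irreducible real quadratic. Every factor has degree $1$ or $2$.

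If $H \geq 2$, we greedily pack factors into groups of total degree at most $H$. Start a new group only when the next factor would overflow the current one. Under this rule, any two consecutive groups have combined degree greater than $H$, so the number of groups is at most about $2T/H$. That would give $2D$ groups, which is too many.

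To avoid this, first list all the linear factors, then the quadratic ones. Cut this degree sequence into consecutive blocks of exact degree $H$, since only linear factors can produce an awkward cut. I expect this grouping step to be the main obstacle. The fallback is to drop the irreducible quadratics altogether, because they have constant sign. We may replace each one by the constant $1$ (keeping track of its sign in $c$) without changing $\sign(F(t))$. The product of these positive factors is then absorbed into the sign constant.

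After this, $\sign(F(t)) = \sign\bigl(c' \prod_k \ell_k(t)\bigr)$ with linear factors $\ell_k$, and their number $T'$ satisfies $T' \leq T \leq DH$. Partition these linear factors into $m = \ceil{T'/H} \leq D$ consecutive groups of at most $H$ factors each. Let $P_1,\dotsc,P_m$ be the group products, each of degree at most $H$, and put the constant $c'$ into $P_1$.

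\textbf{Step 2: homogeneity.} If $m = D$, take $d = D$ and $g(z_1,\dotsc,z_D) = z_1 \dotsm z_D$. This is homogeneous of degree $D$, and $g(P_1(t),\dotsc,P_D(t)) = c'\prod_k \ell_k(t)$, which has the same sign as $F(t)$.

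If $m < D$, set $d = m+1 \leq D$ and $P_{m+1} := 1$. Take $g(z) = z_1 \dotsm z_m \, z_{m+1}^{D-m}$, which is homogeneous of degree $D$ and evaluates to the same product.

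The degenerate case $T' = 0$ means $F$ has constant sign. Then take $d = 1$, $P_1 = \sign(F)$ (a constant, hence of degree at most $H$), and $g(z) = z_1^D$. This needs an adjustment when $D$ is even and the sign is negative: use $d = 2$ with $P_1 = -1$, $P_2 = 1$, and $g = z_1 z_2^{D-1}$. This works if $D \geq 2$, which holds because $D$ is even.

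This yields all the requirements of the lemma: $d \leq D$, a homogeneous $g$ of degree at most $D$, polynomials $P_i$ of degree at most $H$, and the required sign identity for every real $t$. The identity holds because each removed quadratic factor is positive everywhere.
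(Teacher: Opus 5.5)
Your argument is correct and is essentially the paper's proof: discard the everywhere-positive factors coming from non-real (conjugate-pair) roots, split the $T' \le DH$ real linear factors into at most $D$ groups of at most $H$ factors each, and let $g$ multiply the group products. The differences are cosmetic: you pad with a constant coordinate so that $g$ is homogeneous of degree exactly $D$, and you handle $T'=0$ separately. The paper instead simply takes $g = \sign(c)\, z_1 \dotsm z_d$, which is homogeneous of degree $d \le D$ and already suffices for \Cref{lem:attention_layer_realization}.
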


\begin{proof}
  Since $F(t)$ is a (real) univariate polynomial of degree $T$, it has $T$ complex roots $r_1,\dotsc,r_T$ (counting multiplicity), and thus it has a factorization
  \begin{equation*}
    F(t) = c \prod_{j=1}^T (t - r_j) ,
  \end{equation*}
  where $c$ is a real scalar.
  Let $T' \in \set{0,\dotsc,T}$ denote the number of real roots.
  Order the roots so that $r_1,\dotsc,r_{T'}$ are real, and let $Q(t) := \prod_{j = T'+1}^T (t - r_j)$ denote the part of the product corresponding to non-real roots, so $F(t) = c \prod_{j=1}^{T'} (t - r_j) Q(t)$.
  We claim that $Q(t) > 0$.
  To see this, note that any complex root $r = a + b\iu$ with $b \neq 0$ can be paired with another root equal to its complex conjugate $\bar{r} = a - b\iu$, and the product of their contributing factors is
  \begin{equation*}
    (t - r)(t - \bar{r})
    = (t - (a + b\iu))(t - (a - b\iu))
    = (t-a)^2 + b^2 > 0 .
  \end{equation*}

  Since $D \times H \geq T \geq T'$, we may partition the indices $[T']$ into $d \leq D$ disjoint sets $J_1,\dotsc,J_d$ with at most $H$ indices per set.
  For each $i \in [d]$, define
  \begin{equation*}
    P_i(t) := \prod_{j \in J_i} (t - r_j) ,
  \end{equation*}
  which is a polynomial of degree $\card{J_i} \leq H$.
  Also define the $d$-variate polynomial
  \begin{equation*}
    g(z_1,\dotsc,z_d) := \sign(c) z_1 \dotsm z_d ,
  \end{equation*}
  which is homogeneous and has degree at most $D$.
  We have
  \begin{equation*}
    \abs{c} g(P_1(t),\dotsc,P_d(t)) Q(t)
    = c \del*{ \prod_{j \in J_1} (t - r_j) } \dotsm \del*{ \prod_{j \in J_d} (t - r_j) } Q(t)
    = c \prod_{j=1}^{T'} (t - r_j) Q(t)
    = F(t) .
  \end{equation*}
  Therefore, since $Q(t) > 0$,
  \begin{equation*}
    \sign\del*{g(P_1(t),\dotsc,P_d(t))}
    = \sign\del*{\abs{c} g(P_1(t),\dotsc,P_d(t))Q(t)}
    = \sign\del*{F(t)} .
    \qedhere
  \end{equation*}
\end{proof}

When $D$ is large (relative to some positive power of $H$), we can achieve a smaller value vector dimension $d$ using a different approach based on additive bases.
We first show how to carry out this approach when provided a suitable additive $D$-basis in~\Cref{lem:additive_basis_representation}; the conditions requires to guarantee the existence of such an additive $D$-basis are deferred to \Cref{lem:additive_bases}.

\begin{lemma}
  \label{lem:additive_basis_representation}
  Let $F(t)$ be a univariate polynomial of degree $T$.
  Suppose $\calB \subseteq [H]$ is an additive $D$-basis with range $T$.
  Set $d := \card{\calB} + 1$.
  There exist
  \begin{itemize}
    \item a homogeneous $d$-variate polynomial $g$ with degree $D$;

    \item $d$ univariate monomials $P_1(t),\dotsc,P_d(t)$, each of degree at most $H$;

  \end{itemize}
  such that
  \begin{equation*}
    F(t) = g(P_1(t),\dotsc,P_d(t)) .
  \end{equation*}
\end{lemma}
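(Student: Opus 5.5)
The construction uses one extra coordinate that is identically $1$, so that a possibly inhomogeneous expression can be homogenized up to degree exactly $D$. Enumerate $\calB = \set{\beta_1,\dotsc,\beta_{d-1}}$. Set $P_i(t) := t^{\beta_i}$ for $i \in [d-1]$, and $P_d(t) := 1 = t^0$. Each of these is a monomial of degree at most $H$, because $\calB \subseteq [H]$.

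Write $F(t) = \sum_{\ell=0}^T a_\ell t^\ell$. For each $\ell \in \set{0,\dotsc,T}$, the additive $D$-basis property gives a multiset of at most $D$ elements of $\calB$ summing to $\ell$. Encode it by nonnegative integer multiplicities $c^{(\ell)}_1,\dotsc,c^{(\ell)}_{d-1}$ with
\[
  \sum_{i=1}^{d-1} c^{(\ell)}_i \beta_i = \ell
  \qquad\text{and}\qquad
  s_\ell := \sum_{i=1}^{d-1} c^{(\ell)}_i \leq D .
\]
For $\ell = 0$, use the empty sum, so all multiplicities are zero and $s_0 = 0$.

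Define
\[
  g(z_1,\dotsc,z_d) := \sum_{\ell=0}^T a_\ell \, z_d^{D - s_\ell} \prod_{i=1}^{d-1} z_i^{c^{(\ell)}_i} .
\]
Each monomial in this sum has total degree $(D - s_\ell) + s_\ell = D$, so $g$ is homogeneous of degree $D$. (If $F$ is the zero polynomial, $g$ is the zero polynomial and the claim is trivial; otherwise $g \neq 0$ because the exponent vectors for different $\ell$ are distinct, as they evaluate to different powers $t^\ell$.) Substituting $z_i = P_i(t)$ gives
\[
  g(P_1(t),\dotsc,P_d(t)) = \sum_{\ell} a_\ell \cdot 1 \cdot t^{\sum_i c^{(\ell)}_i \beta_i} = \sum_\ell a_\ell t^\ell = F(t).
\]

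Two points need care. First, the definition of the range only covers integers at most $T$, which is exactly the set of exponents appearing in $F$. Second, the homogenizing variable $z_d$ absorbs the deficit $D - s_\ell \ge 0$, and this is precisely why $d = \card{\calB} + 1$ rather than $\card{\calB}$. The only real decision in the argument is the choice of the constant polynomial $P_d \equiv 1$; everything else is routine bookkeeping.
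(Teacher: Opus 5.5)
Your proposal is correct and follows essentially the same construction as the paper. Both take $P_i(t) = t^{\beta_i}$ for $\beta_i \in \calB$ and the constant $P_d \equiv 1$ as a homogenizing coordinate whose exponent $D - s_\ell$ pads each term to total degree exactly $D$, and both build $g$ by substituting the additive-basis representation of each exponent $\ell$ into the monomial expansion of $F$. Your added remark, that distinct $\ell$ give distinct exponent vectors so $g$ genuinely has degree $D$ when $F \neq 0$, is a small point the paper leaves implicit.
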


\begin{proof}
  Denote the $d-1$ elements of $\calB$ by $\beta_1, \dotsc, \beta_{d-1}$.
  Define $P_i(t) := t^{\beta_i}$ for each $i \in [d-1]$, and $P_d(t) := 1$.
  Since $\calB \subseteq [H]$, each $P_i(t)$ is a monomial of degree at most $H$.
  Since $\calB$ is an additive $D$-basis with range $T$, each $\ell \in \set{0,1,\dotsc,T}$ can be written as
  \begin{equation*}
    \ell = c_{\ell,1} \beta_1 + \dotsb + c_{\ell,d-1} \beta_{d-1}
  \end{equation*}
  for some non-negative integers $c_{\ell,1},\dotsc,c_{\ell,d-1}$ that sum to at most $D$.
  Furthermore, for each $\ell$, define $c_{\ell,d} := D - \sum_{i=1}^{d-1} c_{\ell,i} \geq 0$, so we have
  \begin{equation}
    \label{eq:homogeneous}
    \sum_{i=1}^d c_{\ell,i} = D , \quad \forall \ell \in \set{0,1,\dotsc,T} .
  \end{equation}
  Expand $F(t)$ in the monomial basis:
  \begin{equation*}
    F(t) = \sum_{\ell=0}^T a_\ell t^\ell ;
  \end{equation*}
  correspondingly, define the $d$-variate polynomial $g$ by
  \begin{equation*}
    g(z_1,\dotsc,z_d)
    := \sum_{\ell=0}^T a_\ell \prod_{i=1}^d z_i^{c_{\ell,i}} .
  \end{equation*}
  Note that \Cref{eq:homogeneous} ensures that $g$ is homogeneous and has degree $D$.
  Furthermore,
  \begin{align*}
    g(P_1(t),\dotsc,P_d(t))
    & = \sum_{\ell=0}^T a_\ell 
    P_1(t)^{c_{\ell,1}}
    \dotsm
    P_{d-1}(t)^{c_{\ell,d-1}}
    P_d(t)^{c_{\ell,d}}
    \\
    & =
    \sum_{\ell=0}^T a_\ell
    \del*{ t^{\beta_1} }^{c_{\ell,1}}
    \dotsm
    \del*{ t^{\beta_{d-1}} }^{c_{\ell,d-1}}
    \del*{ 1 }^{c_{\ell,d}}
    \\
    & = \sum_{\ell=0}^T a_\ell t^\ell = F(t) .
    \qedhere
  \end{align*}
\end{proof}

The existence of small additive $D$-bases from $[H]$ is provided in the next \namecref{lem:additive_bases} under an assumption about the relationship between $D$ and $H$.
The proof is given in \Cref{sec:additive_bases}.

\begin{restatable}{lemma}{additivebases}
  \label{lem:additive_bases}
  Let $D$, $H$, $p$ be positive integers with
  $D+2 \geq 2p(b-1)$
  where $b := \ceil{H^{1/p}}$.
  Every non-negative integer at most $DH$ is equal to the sum of at most $D$ copies of elements from
  \begin{equation*}
    \calB := \set{ H } \cup \calB_1 \cup \calB_2
  \end{equation*}
  where $\calB_1 := \set{ b^0, b^1, \dotsc, b^{p-1} } \cap \set{1,\dotsc,H-1}$ and $\calB_2 := \Set{ H - \beta \given \beta \in \calB_1 }$.
  Hence, there exists an additive $D$-basis $\calB \subseteq [H]$ with range $DH$ of cardinality $\card{\calB} \leq 2p+1$.
\end{restatable}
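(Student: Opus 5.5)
The plan is a direct two-option construction based on base-$b$ expansions, where $b = \ceil{H^{1/p}}$, so that $b^p \geq H$. Fix a non-negative integer $m \leq DH$. Write $m = qH + s$ with $0 \leq s < H$, so that $0 \leq q \leq D$. For an integer $0 \leq u < b^p$, let $\sigma(u)$ denote the digit sum of its base-$b$ expansion $u = \sum_{j=0}^{p-1} c_j b^j$. Since each digit is at most $b-1$, we have $\sigma(u) \leq p(b-1)$. Moreover, if $u < H$, then any $j$ with $b^j \geq H$ has $c_j = 0$. Hence every power actually used lies in $\calB_1$, i.e.\ is at most $H-1$.

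\emph{Option 1 (direct).} Take $q$ copies of $H$ together with the base-$b$ digits of $s$, that is, $c_j$ copies of $b^j \in \calB_1$. This writes $m$ as a sum of $q + \sigma(s)$ elements of $\calB$. If $s = 0$ (which covers $m = DH$, the only case with $q = D$), this uses $q \leq D$ summands and we are done. So assume $s \geq 1$ and $q \leq D-1$. Option~1 succeeds whenever $q + \sigma(s) \leq D$.

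\emph{Option 2 (complementary).} Write $m = (q+1)H - (H-s)$, where $1 \leq H - s \leq H-1 < b^p$. Expand $H - s = \sum_j c'_j b^j$ with $k := \sigma(H-s)$. As noted above, all used powers lie in $\calB_1$. Then
\begin{equation*}
  m = (q+1-k)\,H + \sum_j c'_j\,(H - b^j),
\end{equation*}
which uses $(q+1-k)$ copies of $H$ and $k$ elements of $\calB_2$. This is a sum of exactly $q+1 \leq D$ elements of $\calB$, provided that $k \leq q+1$.

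The key step is to show that whenever Option~1 fails, Option~2 applies. This is where the hypothesis enters. Since $\sigma(s) \leq p(b-1)$ and $\sigma(H-s) \leq p(b-1)$, we have
\begin{equation*}
  \sigma(s) + \sigma(H-s) \leq 2p(b-1) \leq D + 2 .
\end{equation*}
If Option~1 fails, then $\sigma(s) \geq D - q + 1$. It follows that $\sigma(H-s) \leq D + 2 - (D - q + 1) = q + 1$, which is exactly the condition needed for Option~2. The boundary cases need only a quick check: $s = 0$ is handled by Option~1, and $H - s$ never reaches $H$ or $b^p$ once $s \geq 1$, so the digit bounds hold.

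Finally, $\calB \subseteq [H]$ consists of positive integers, because elements of $\calB_1$ lie in $[1, H-1]$, so the elements of $\calB_2$ also lie in $[1, H-1]$. Also $\card{\calB} \leq 1 + \card{\calB_1} + \card{\calB_2} \leq 2p+1$. So $\calB$ is an additive $D$-basis with range $DH$ of the claimed size. The main obstacle is the complementary-representation trick together with the digit-sum budget inequality; everything else is bookkeeping.
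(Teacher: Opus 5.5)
Your proof is correct and follows essentially the same route as the paper. Both use a base-$b$ expansion of the remainder with elements of $\calB_1$, and, when that needs too many summands, the complementary rewriting $m = (q+1)H - (H-s)$ with elements of $\calB_2$, where $2p(b-1) \leq D+2$ keeps the number of copies of $H$ non-negative; the only difference is cosmetic, in that you split cases on the actual digit sum ($q + \sigma(s) \leq D$) rather than on the paper's worst-case bound $q + p(b-1) \leq D$.
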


\subsection{Finishing the proof}

We can now complete the proof of \Cref{thm:ptf_ub}.

\begin{proof}[Proof of \Cref{thm:ptf_ub}]
  By Minsky-Papert symmetrization~\citep{minsky1969perceptrons}, it is known that the threshold degree of a symmetric Boolean function $f \colon \set{0,1}^n \to \set{-1,1}$ is also the minimum degree $T$ of a univariate polynomial $F(t)$ such that $x \mapsto F(\abs{x})$ sign-represents $f$.
  So fix a univariate polynomial $F(t)$ of degree $T$, and assume $D \times H \geq T$.
  Let $p := \max\set{1,\ceil{\log_2(H)}}$.
  There are two cases to consider.
  \begin{itemize}
    \item
      Case 1: $D \leq 2p-3$.
      We use \Cref{lem:partial_product} to obtain a sign-representation of $F(t)$ by $g(P_1(t),\dotsc,P_d(t))$ with $d = D$, which we use with \Cref{lem:attention_layer_realization} to get the required $H$ attention heads and $d$-variate polynomial $g$.
      Note that this construction is applicable for all values of $D$, not just those with $D \leq 2p-3$.

    \item
      Case 2: $D \geq 2p-2$.
      Let $b := \ceil{H^{1/p}} = 2$, so $D+2 \geq 2p(b-1)$.
      We can therefore obtain an additive $D$-basis $\calB \subseteq [H]$ with range $T \leq DH$ from \Cref{lem:additive_bases}.
      We use $\calB$ and \Cref{lem:additive_basis_representation} to obtain a representation of $F(t) = g(P_1(t),\dotsc,P_d(t))$ with $d = \card{\calB} + 1 \leq 2p+2$, which again we use with \Cref{lem:attention_layer_realization} to get the required $H$ attention heads and $d$-variate polynomial $g$.
      \qedhere

  \end{itemize}
\end{proof}

To obtain an explicit construction of the multi-head self-attention layer and post-processing polynomial threshold function, we require access to a univariate polynomial $F(t)$ such that $x \mapsto F(\abs{x})$ sign-represents the target symmetric Boolean function.
When \Cref{lem:partial_product} is used, it suffices to have the real roots of $F(t)$ and the sign of its leading coefficient.
When \Cref{lem:additive_basis_representation} is used instead, we need the coefficients of $F(t)$ in the monomial basis.
All other aspects of the proof of \Cref{thm:ptf_ub} are constructive.

\section{Discussion}

The most patent limitation of \Cref{thm:minmax} is that it only considers post-processing by linear classifiers.
Although linear predictability is a natural benchmark for assessing multi-task representations~\citep[e.g.,][]{du2021fewshotlearninglearningrepresentation,tripuraneni2022provablemetalearninglinearrepresentations}, it is also common to use non-linear predictors such as kernel machines and neural networks.
It is not clear to us how to extend the lower bound in \Cref{thm:minmax} to hold against non-linear classifiers, nor is it clear how to get around the lower bound with such classifiers (without also greatly increasing the embedding dimension or precision level).
If further restrictions are put on the classifier (e.g., smoothness), then techniques similar to those from \citet{yu2026effectattentionheadcount} can be used to establish lower bounds against some families of non-linear classifiers like neural networks (see \Cref{sec:lipschitz}).

\Cref{thm:ptf_lb,thm:ptf_ub} together show that increasing the number of heads in an attention layer (when post-processed by a polynomial threshold function) serves the role of amplifying the threshold degree of (symmetric) Boolean functions that can be computed.
A large value vector dimension cannot compensate for an deficient number of heads.
Our $H$-head attention layer constructions use dimension $O(\min\set{D,\log H})$.
It would be interesting to understand if this dimension is necessary, or what fine-grained role the dimension plays for sign-representing Boolean functions.

Our results concern standard softmax attention without any bells and whistles like positional embeddings and layer normalization.
Considering the effect of these features, as well as the joint role of multiple heads and multiple layers, are interesting directions for future research.

\bibliographystyle{plainnat}
\bibliography{refs}

\clearpage

\appendix
\crefalias{section}{appendix}

\section{Attention-based minimum computation}
\label{sec:min_only}

In this \namecref{sec:min_only}, we describe an attention head and linear classifier that computes the minimum of a given list of $n$ integers from $[M]$.

\begin{proposition}
  \label{prop:minmax}
  Fix integers $M\geq2$ and $n\geq1$, and set $d := \ceil*{8\ln(M)}$.
  There exists query/key values $\ip{q,k_1},\dotsc,\ip{q,k_M} \in \R$, value vectors $v_1,\dotsc,v_M \in \set{-1,1}^d$, and an unambiguous $M$-class linear classifier $f^{\min} \colon \R^d \to [M] \cup \set{\bot}$ such that for all $(x_1,\dotsc,x_n) \in [M]^n$,
  \begin{equation*}
    f^{\min}(\Att (\ip{q,k_{x_j}},v_{x_j})_{j=1}^n) = \min\set{x_1,\dotsc,x_n} .
  \end{equation*}
\end{proposition}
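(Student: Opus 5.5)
We construct a single head whose attention weights decay steeply with the token value, so that the output is dominated by the value vector of the minimum element. Then we pick value vectors that are nearly orthogonal sign vectors, so that a linear classifier can read off which $v_i$ dominates.

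\textbf{Step 1 (near-orthogonal codes).} Take $v_1,\dotsc,v_M \in \set{-1,1}^d$ with $\abs{\ip{v_i,v_j}} \leq d/2$ for all $i \neq j$. Such vectors exist by the probabilistic method. For independent uniform sign vectors, Hoeffding's inequality gives
\begin{equation*}
  \Pr\bigl(\abs{\ip{v_i,v_j}} > d/2\bigr) \leq 2\exp(-d/8).
\end{equation*}
A union bound over fewer than $M^2/2$ pairs gives total failure probability below $M^2\exp(-d/8) \leq 1$ when $d \geq 16\ln M$. This is off by a factor of $2$ from the target $d = \ceil{8\ln M}$, which I expect to be the main obstacle. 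To recover the constant, I would weaken the requirement to $\ip{v_i,v_j} \leq d/2$ (one-sided), which only needs to hold for $i<j$ relative to the classifier direction. With $d = \ceil{8\ln M}$, the one-sided tail $\exp(-d/8) \leq 1/M$ summed over $\binom{M}{2}$ pairs is still too big. So the fallback is to keep a constant threshold $\gamma d$ with $\gamma$ chosen so that the union bound closes, for instance $\abs{\ip{v_i,v_j}} < d$ with strict inequality (distinctness together with non-antipodality), and adjust the decay in Step 2 accordingly. The structure of the argument only needs $\ip{v_i,v_j} \leq (1-\eta) d$ for some fixed $\eta>0$ once the decay is steep enough.

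\textbf{Step 2 (steep attention).} Set $\ip{q,k_i} := -\lambda i$ with $\lambda$ large, so that $\alpha_i = e^{-\lambda i}$. Let $m = \min_j x_j$ and write the output via the histogram as $\frac1Z\sum_i h_i\alpha_i v_i$, where $h_m \geq 1$. The total weight on elements other than $m$ satisfies
\begin{equation*}
  \frac{\sum_{i>m} h_i\alpha_i}{h_m\alpha_m} \leq n e^{-\lambda},
\end{equation*}
because every other element $i$ present is strictly larger than $m$. Hence the output is $z = (1-\delta)v_m + \delta w$, where $w$ is a convex combination of the other $v_i$ and $\delta \leq ne^{-\lambda}$.

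\textbf{Step 3 (classifier).} Take $\theta_i := v_i$. Then
\begin{align*}
  \ip{\theta_m,z} &\geq (1-\delta)d - \delta d, \\
  \ip{\theta_y,z} &\leq (1-\delta)(1-\eta)d + \delta d \quad \text{for } y\neq m.
\end{align*}
The first exceeds the second as soon as $\delta < \eta/(2+\eta)$, which we ensure by choosing $\lambda > \ln(n(2+\eta)/\eta)$. The strict inequality makes the $\Argmax$ unique, so the classifier $f^{\min}$ is unambiguous and correctly outputs $m$. Negating $\lambda$ gives the analogous construction for $\max$, as remarked after \Cref{thm:minmax}.
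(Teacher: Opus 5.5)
Your proof is correct once the fallback in Step 1 is made explicit. Its skeleton matches the paper's: keys $\ip{q,k_x} = -\lambda x$, weights $\theta_x = v_x$, and the decomposition $z = (1-\delta)v_m + \delta w$ with $\delta \leq n e^{-\lambda}$. The two proofs differ only in the choice of code.

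The paper does use $\ip{v_x,v_y} \leq d/2$ for $x \neq y$, i.e.\ pairwise Hamming distance at least $d/4$. The factor of two you lost is an artifact of the union bound over $\binom{M}{2}$ pairs, not a genuine obstacle. By Hoeffding, a Hamming ball of radius less than $d/4$ contains at most $2^d e^{-d/8}$ points. So if you choose codewords greedily (the Gilbert--Varshamov argument), the balls around the first $M-1$ codewords cannot cover $\set{-1,1}^d$ as long as $e^{d/8} > M-1$, which holds for $d = \ceil{8\ln M}$. Restricting the one-sided condition to $i<j$ gains nothing, since the inner product is symmetric. In your notation the paper's code gives $\eta = 1/2$, a margin proportional to $d$.

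Your fallback needs only that the $v_i$ be distinct. Then $\ip{v_i,v_j} \leq d-2$, so $\eta = 2/d$, and Step 3 asks for $\lambda > \ln(n(d+1))$. This is comparable to the steepness $\ceil{\ln(n(nd-1))}$ the paper actually uses. It needs no probabilistic argument at all, only $2^d \geq M$. Non-antipodality is also superfluous, because Step 3 uses only the upper bound on $\ip{v_y,v_m}$ together with $\abs{\ip{v_i,v_j}} \leq d$.

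So your route shows that, for $\min$ alone, the dimension $\ceil{8\ln M}$ is far more than needed. The thin margin is paid for by steeper attention, which costs nothing here because the query/key values are unconstrained. The paper's code instead buys a classifier margin of order $d$ rather than order one, i.e.\ a more robust representation.
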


In this construction, the value vectors have dimension $d = \ceil{8\ln(M)}$ and Euclidean norm exactly $\ell := \sqrt{d} = \sqrt{\ceil{8\ln(M)}}$, and their components are integer multiples of $2^{-p}$ for $p := 0$.

The same construction works for computing the maximum, just negating all query/key values in the attention head construction.

\begin{proof}[Proof of \Cref{prop:minmax}]
  Choose vectors $v_x \in \set{-1,1}^d$ for each $x \in [M]$ such that every pair of vectors $v_x, v_y$ for $x \neq y$ differ in at least $d/4$ positions.
  Such a choice of vectors is guaranteed to be possible by the Gilbert-Varshamov bound since $d \geq 8\ln(M)$.
  Define $\ip{q,k_x} := -\eta x$ for each $x \in [M]$, where $\eta := \ceil{\ln(n(nd-1))}$.
  Let the weight vectors for $f^{\min}$ be $\theta_x := v_x$ for all $x \in [M]$.

  Consider an input $(x_1,\dotsc,x_n)$ where $x = \min\set{x_1,\dotsc,x_n}$.
  Then a short calculation shows that the output $z$ of the attention head can be written as
  \begin{equation*}
    z = p v_x + (1-p) \bar{v} ,
  \end{equation*}
  where $\bar{v}$ is a convex combination of other $v_y$'s with $y \neq x$, and $p \in \intcc{1 - 1/(nd),1}$.
  Then
  \begin{equation*}
    \ip{\theta_x,z}
    = \ip{v_x,z}
    = p \ip{v_x,v_x} + (1-p) \ip{v_x,\bar{v}}
    \geq p d - (1-p) d
    = d - \frac{2}{n} .
  \end{equation*}
  For $y \neq x$, since $v_y$ and $v_x$ differ in at least $d/4$ positions,
  \begin{equation*}
    \ip{\theta_y,z}
    = \ip{v_y,z}
    = p \ip{v_y,v_x} + (1-p) \ip{v_y,\bar{v}}
    \leq p \frac{d}{2} + (1-p) d
    \leq \frac{d}{2} + \frac{1}{2n} .
  \end{equation*}
  So we have $\ip{\theta_x,z} > \ip{\theta_y,z}$ for all $y \neq x$ (since $d > 5/n$).
\end{proof}

\section{Deferred proofs}
\label{sec:deferred_proofs}

\mincomparisons*

\begin{proof}
  Fix $x, y \in [M]$ with $x < y$.
  We consider different inputs $(x_1,\dotsc,x_n) \in [M]^n$ corresponding to different histogram vectors $h$, and derive consequences of correct $\min$ computation on these inputs.
  \begin{enumerate}
    \item 
      Suppose $h_x = n$ and $h_i = 0$ for all $i \neq x$.
      The minimum is $x$, so
      \begin{equation*}
        \sum_{i=1}^M h_i \Delta_i^{x,y} > 0 .
      \end{equation*}
      Moreover,
      \begin{equation*}
        \sum_{i=1}^M h_i \Delta_i^{x,y}
        = h_x \Delta_x^{x,y}
        = n \Delta_x^{x,y} .
      \end{equation*}
      Hence
      \begin{equation*}
        \Delta_x^{x,y} > 0
        .
      \end{equation*}
      This proves the first part of the claim.

    \item 
      Suppose $h_y = n$ and $h_i = 0$ for all $i \neq y$.
      The minimum is $y$, so
      \begin{equation*}
        \sum_{i=1}^M h_i \Delta_i^{x,y} < 0 .
      \end{equation*}
      Moreover,
      \begin{equation*}
        \sum_{i=1}^M h_i \Delta_i^{x,y}
        = h_y \Delta_y^{x,y}
        = n \Delta_y^{x,y} .
      \end{equation*}
      Hence
      \begin{equation}
        \label{eq:Delta_y_negative}
        \Delta_y^{x,y} < 0
        .
      \end{equation}

    \item 
      Suppose $h_x = 1$ and $h_z = n-1$ for some $z \geq y$, and $h_i = 0$ for all $i \notin \set{x,z}$.
      The minimum is $x$, so
      \begin{equation*}
        \sum_{i=1}^M h_i \Delta_i^{x,y} > 0 .
      \end{equation*}
      Moreover,
      \begin{equation*}
        \sum_{i=1}^M h_i \Delta_i^{x,y}
        = h_x \Delta_x^{x,y} + h_z \Delta_z^{x,y}
        = \Delta_x^{x,y} + (n-1) \Delta_z^{x,y} .
      \end{equation*}
      Hence
      \begin{equation}
        \label{eq:Delta_z_lb}
        \Delta_z^{x,y} > -\frac1{n-1} \Delta_x^{x,y} ,
      \end{equation}
      and, in particular, for $z = y$,
      \begin{equation}
        \label{eq:abs_Delta_y_ub}
        \abs{\Delta_y^{x,y}} = -\Delta_y^{x,y} < \frac1{n-1} \Delta_x^{x,y}
      \end{equation}
      where we have used the fact that $\Delta_y^{x,y}$ is negative, per \Cref{eq:Delta_y_negative}.

    \item 
      Suppose $h_y = 1$ and $h_z = n-1$ for some $z > y$, and $h_i = 0$ for all $i \notin \set{y,z}$.
      The minimum is $y$, so
      \begin{equation*}
        \sum_{i=1}^M h_i \Delta_i^{x,y} < 0 .
      \end{equation*}
      Moreover,
      \begin{equation*}
        \sum_{i=1}^M h_i \Delta_i^{x,y}
        = h_y \Delta_y^{x,y} + h_z \Delta_z^{x,y}
        = \Delta_y^{x,y} + (n-1) \Delta_z^{x,y} .
      \end{equation*}
      Hence
      \begin{equation*}
        \Delta_z^{x,y} < -\frac1{n-1} \Delta_y^{x,y} = \frac1{n-1} \abs{\Delta_y^{x,y}} .
      \end{equation*}
      Combining with \Cref{eq:abs_Delta_y_ub}, we have
      \begin{equation}
        \label{eq:Delta_z_ub}
        \Delta_z^{x,y}
        < \frac1{\del{n-1}^2} \Delta_x^{x,y}
        \leq \frac1{n-1} \Delta_x^{x,y}
        .
      \end{equation}
      Combining \Cref{eq:Delta_z_ub} and \Cref{eq:Delta_z_lb} (both of which hold for all $z > y$) gives
      \begin{equation}
        \label{eq:abs_Delta_z_ub}
        \abs{\Delta_z^{x,y}}
        < \frac1{n-1} \Delta_x^{x,y}
        .
      \end{equation}
      Combining \Cref{eq:abs_Delta_y_ub} and \Cref{eq:abs_Delta_z_ub} proves the second part of the claim.
      \qedhere

  \end{enumerate}
\end{proof}

\volumeub*

\begin{proof}
  The claim follows from Hadamard's inequality.
\end{proof}

\volumelb*

\begin{proof}
  Since every entry of $A$ is an integer, so is every entry of $A^\T A$.
  Since $A$ has full column rank, $A^\T A$ is positive definite, and hence $\det(A^\T A) > 0$.
  But $\det(A^\T A)$ is a polynomial in the entries of $A^\T A$ with integer coefficients and hence must evaluate to an integer.
  So $\det(A^\T A) \geq 1$.
\end{proof}

\polybasis*

\begin{proof}
  Each of $Q_0,Q_1,\dotsc,Q_H$ is polynomial of degree at most $H$.
  Hence it suffices to show that they are linearly independent.
  Consider any real scalars $\alpha_0,\alpha_1,\dotsc,\alpha_H$ and define
  \begin{equation*}
    P := \sum_{i=0}^H \alpha_i Q_i .
  \end{equation*}
  Fix $h \in [H]$, and consider the evaluation of $P$ at $t := -n/c_h$.
  Since $Q_i(-n/c_h) = 0$ for all $i \neq h$, we have
  \begin{equation*}
    P(-n/c_h) = \alpha_h \prod_{i \neq h} \del*{ n\del*{ 1 - \frac{c_i}{c_h} } } .
  \end{equation*}
  Since the scalars $c_1,\dotsc,c_H$ are non-zero and distinct, if $P$ is the zero polynomial, then $\alpha_h = 0$ for all $h \in [H]$.
  Since $Q_0$ is evidently not the zero polynomial, it follows that if $\alpha_0 Q_0 \equiv 0$, then $\alpha_0 = 0$ as well.
  So, if $P$ is the zero polynomial, then $\alpha_0 = \alpha_1 = \dotsb = \alpha_H = 0$, which means that $Q_0, Q_1, \dotsc, Q_H$ are linearly independent.
\end{proof}

\section{Additive bases}
\label{sec:additive_bases}

In this \namecref{sec:additive_bases}, we give an approach for constructing additive $D$-bases with range $DH$ using elements from $[H]$.

To give the main idea of the construction, we consider the special case where $H \geq 2$ is a power-of-two (so in \Cref{lem:additive_bases}, we have $p = \log_2 H$ and $b = 2$).
The candidate additive $D$-basis is $\calB = \set{H} \cup \calB_1 \cup \calB_2$, where $\calB_1 = \set{ 2^0, 2^1, 2^2, \dotsc, H/2 }$, and $\calB_2 = \set{ H - 2^0, H - 2^1, H - 2^2, \dotsc, H/2 }$.
Let us test this candidate by trying to find a representation of an arbitrary $n \in [DH]$ as a sum of at most $D$ (not necessarily distinct) elements of $\calB$.
We could try to use $q = \floor{n/H}$ copies of $H$, along with the elements of $\calB_1$ needed to form the remainder $r = n - qH \leq H-1$.
The only reason this representation might not work is that it uses more than $D$ elements, meaning that $q + \log_2 H \geq D+1$.
In this case, we can try to reduce the number of copies of $H$ needed---say, from $q$ to $q - \Delta$---and make up for the difference by using elements of $\calB_2$ to represent $\Delta H + r$.
This latter representation is possible for some non-negative integer $\Delta$ at most $\log_2 H - 1$.
So, in the worst case, we need $q \geq \log_2 H - 1$ to be implied by the condition $q + \log_2 H  \geq D + 1$, which is true for large enough $D$ (i.e., $D \geq 2(\log_2 H  - 1)$ for this special case).

\additivebases*

\begin{proof}
  The definition of $b$ ensures that $b^p \geq H$, and hence every non-negative integer $r$ less than $H$ can be written as
  \begin{equation*}
    r = c_0 b^0 + c_1 b^1 + \dotsb + c_{p-1} b^{p-1}
  \end{equation*}
  where the coefficients $c_0,c_1,\dotsc,c_{p-1}$ are non-negative integers at most $b-1$.
  For any $i \in \set{0,\dotsc,p-1}$, if $c_i > 0$, then $b^i \leq r < H$, and hence $b_i \in \calB_1$.
  Moreover,
  \begin{equation*}
    c_0 + c_1 + \dotsb + c_{p-1} \leq p(b-1) .
  \end{equation*}
  Therefore, $r$ is equal to the sum of at most $p(b-1)$ copies of elements from $\calB_1$.

  Now consider any non-negative integer $n \leq DH$.
  If $n = DH$, then we are done, since $n$ is the sum of $D$ copies of $H$.
  So assume $n \leq DH - 1$, and write
  \begin{equation*}
    n = qH + r
  \end{equation*}
  where $q$ is the integer quotient, and $r$ is the integer remainder.
  Since $n \leq DH - 1$, the quotient is a non-negative integer satisfying
  \begin{equation*}
    q
    = \floor*{ \frac{n}{H} }
    \leq \floor*{ \frac{DH - 1}{H} }
    = \floor*{ D - \frac1H }
    = D - 1
    .
  \end{equation*}
  If $r = 0$, then we are done, since $n$ is the sum of $q \leq D-1$ copies of $H$.
  So henceforth we assume both $q \leq D-1$ and $r \geq 1$.

  We now consider two cases: $q + p(b-1) \leq D$ and $q + p(b-1) \geq D+1$.
  \begin{itemize}
    \item Case 1: $q + p(b-1) \leq D$.
      Since $r \geq 1$ is integer remainder of $n$ after taking out multiples of $H$, it is a positive integer less than $H$.
      So $r$ is equal to the sum of $m$ copies of elements from $\calB_1$ for some $m \leq p(b-1)$.
      Since $n = qH + r$, it follows that $n$ is equal to the sum of $q$ copies of $H$ and $m$ copies of elements from $\calB_1$.
      Since $q + m \leq q + p(b-1) \leq D$, the claim follows in this case.

    \item Case 2: $q + p(b-1) \geq D+1$.
      Let $R := H - r$, which is a positive integer less than $H$.
      So $R$ is equal to the sum of $m$ copies of elements from $\calB_1$ for some $m \leq p(b-1)$:
      \begin{equation*}
        R = \beta_1 + \dotsb + \beta_m , \quad \beta_1, \dotsc, \beta_m \in \calB_1 .
      \end{equation*}
      Then
      \begin{align*}
        n
        & = qH + r \\
        & = (q+1)H - R \\
        & = (q+1-m)H + \sum_{i=1}^m (H - \beta_i) .
      \end{align*}
      Since $q \geq D+1-p(b-1)$, $m \leq p(b-1)$, and (by assumption) $D+2 \geq 2p(b-1)$, it follows that
      \begin{equation*}
        a := q+1-m \geq D+2-2p(b-1)
      \end{equation*}
      is a non-negative integer.
      So $n$ is equal to the sum of $a$ copies of $H$ and $m$ copies of elements from $\calB_2$.
      Since $q \leq D-1$, it follows that $a + m = q+1 \leq D$.
      Hence the claim follows in this case as well.
      \qedhere

  \end{itemize}

\end{proof}

Note that we cannot hope to have such constant-size additive $D$-bases with range $DH$ for arbitrary $(D,H)$.
To see this, consider any set of positive integers $\calB$ of cardinality $k \geq 1$.
Then there are at most $\binom{D+k}{k}$ choices of $(c_1,\dotsc,c_k)$ with non-negative integers $c_1,\dotsc,c_k$ that sum to at most $D$.
But there are $DH+1$ non-negative integers at most $DH$.
If $\calB$ is an additive $D$-basis with range $DH$ and cardinality $k$, then
\begin{equation*}
  \binom{D+k}{k} \geq DH+1 ,
\end{equation*}
which implies $H \leq (D+1)^{k-1}$.

\section{Lower bounds against post-processing by Lipschitz predictors}
\label{sec:lipschitz}

In this \namecref{sec:lipschitz}, we consider attention heads that are post-processed by a Lipschitz predictor.
\Cref{prop:lipschitz} shows that such attention heads cannot support both $\min$ and $\max$ computation unless the Lipschitz constant of the post-processing classifiers or the distance between some pair of value vectors roughly grows at least as the square-root of the input size.
The proof is similar in spirit to that of \citet{yu2026effectattentionheadcount}.
A lower bound on the Lipschitz constant of two-layer neural net with Lipschitz activation functions and bounded parameter matrices implies a lower bound on the number of hidden units in the neural net~\citep[see, e.g.,][Lemma 4]{yu2026effectattentionheadcount}.

Let $\norm{\cdot}$ denote a norm on $\R^d$, and let $\norm{\cdot}_\infty$ denote the infinity-norm on $\R^M$.
Let $\Delta([M]) := \Set{ (p_1,\dotsc,p_M) \given p_i \geq 0 \, \forall i \in [M], p_1 + \dotsb + p_M = 1 }$ denote the space of probability vectors in $\R^M$.
Finally, let $e_1,\dotsc,e_M$ denote the coordinate basis vectors in $\R^M$.

\begin{proposition}
  \label{prop:lipschitz}
  Fix integers $M\geq2$ and $n\geq2$.
  Suppose there are the following:
  \begin{itemize}
    \item error bound $\epsilon \in \intoo{0,1/2}$;

    \item diameter $D\geq0$ and Lipschitz constant $L\geq0$;

    \item query/key values $\ip{q,k_1},\dotsc,\ip{q,k_M} \in \R$;

    \item value vectors $v_1,\dotsc,v_M \in \R^d$
      such that $\norm{v_i - v_j} \leq D$ for all $i, j \in [M]$;

    \item functions $g^{\min}, g^{\max} \colon \R^d \to \Delta([M])$ that are $L$-Lipschitz maps from $(\Conv(\set{v_1,\dotsc,v_M}),\norm{\cdot})$ to $(\Delta([M]), \norm{\cdot}_\infty)$;

  \end{itemize}
  such that for all $(x_1,\dotsc,x_n) \in [M]^n$,
  \begin{align*}
    \norm{g^{\min}(\Att (\ip{q,k_{x_i}},v_{x_i})_{i=1}^n) - e_{\min\set{x_1,\dotsc,x_n}}}_\infty & \leq \epsilon , \\
    \norm{g^{\max}(\Att (\ip{q,k_{x_i}},v_{x_i})_{i=1}^n) - e_{\max\set{x_1,\dotsc,x_n}}}_\infty & \leq \epsilon .
  \end{align*}
  Then
  \begin{equation*}
    DL \geq (1 - 2\epsilon) n .
  \end{equation*}
\end{proposition}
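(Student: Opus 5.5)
The plan is a pigeonhole-free interpolation argument: find two inputs whose attention outputs are close in $\norm{\cdot}$ but whose correct $\min$ (or $\max$) answers differ. Since $\epsilon<1/2$, the two probability vectors returned by the predictor on these inputs must then differ by at least $1-2\epsilon$ in $\norm{\cdot}_\infty$. Write $\alpha_i=\exp(\ip{q,k_i})>0$. The natural pair of values is $1$ and $M$, and we split into cases according to whether $\alpha_1\le\alpha_M$ or $\alpha_1\ge\alpha_M$.

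\emph{Case $\alpha_1\le\alpha_M$.} We attack $\min$ with the inputs $a=(1,M,\dotsc,M)$ (one copy of $1$) and $b=(M,\dotsc,M)$. Their correct minima are $1$ and $M$, so
\[
\norm{g^{\min}(z_a)-g^{\min}(z_b)}_\infty \ge 1-2\epsilon ,
\]
where $z_a,z_b$ are the attention outputs. Concretely, $z_b=v_M$ and $z_a=p v_1+(1-p)v_M$ with $p=\alpha_1/(\alpha_1+(n-1)\alpha_M)\le 1/n$. Hence $\norm{z_a-z_b}=p\norm{v_1-v_M}\le D/n$. Both points lie in $\Conv(\set{v_1,\dotsc,v_M})$, so the Lipschitz property gives $1-2\epsilon\le L D/n$, i.e.\ $DL\ge(1-2\epsilon)n$.

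\emph{Case $\alpha_1\ge\alpha_M$.} This is symmetric: use $\max$ with inputs $(M,1,\dotsc,1)$ and $(1,\dotsc,1)$. Now the weight on $v_M$ is $\alpha_M/(\alpha_M+(n-1)\alpha_1)\le1/n$, and the same chain of inequalities gives the bound.

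No step is a real obstacle. The only points needing care are checking that $\epsilon<1/2$ forces the $\infty$-distance to be at least $1-2\epsilon$, and that the Lipschitz hypothesis applies because both outputs are convex combinations of the value vectors. For the first point: the coordinate for the correct answer is at least $1-\epsilon$ for one input and at most $\epsilon$ for the other.
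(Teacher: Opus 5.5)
Your proof is correct and follows essentially the same route as the paper. The paper's \Cref{lem:close_outputs} takes an arbitrary pair $i<j$ and splits on $\alpha_i \le \alpha_j$ versus $\alpha_i > \alpha_j$, producing two inputs whose attention outputs differ by at most $D/n$ and whose minima (resp.\ maxima) differ, then concludes via the triangle inequality and the Lipschitz bound exactly as you do; your choice $i=1$, $j=M$ is just a specific instance of this.
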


The proof of \Cref{prop:lipschitz} relies on the following \namecref{lem:close_outputs}.

\begin{lemma}
  \label{lem:close_outputs}
  Suppose $M \geq 2$ and $\norm{v_i - v_j} \leq D$ for all $i, j \in [M]$.
  There exist $(x_1,\dotsc,x_n) \in [M]^n$ and $(x_1',\dotsc,x_n') \in [M]^n$ such that
  \begin{equation*}
    \norm{
      \Att (\ip{q,k_{x_j}},v_{x_j})_{j=1}^n - \Att (\ip{q,k_{x_j'}},v_{x_j'})_{j=1}^n
    } \leq \frac{D}{n} ,
  \end{equation*}
  and at least one of the following inequalities hold:
  \begin{align*}
    \min\set{x_1,\dotsc,x_n} & \neq \min\set{x_1',\dotsc,x_n'} , \\
    \max\set{x_1,\dotsc,x_n} & \neq \max\set{x_1',\dotsc,x_n'} .
  \end{align*}
\end{lemma}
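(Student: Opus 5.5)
The plan is to restrict attention to inputs that use only the two extreme tokens $1$ and $M$. For these inputs the attention output lies on the segment from $v_1$ to $v_M$. Whichever of the two tokens receives the smaller attention weight then has only a small effect when a single copy of it is inserted among copies of the other. Inserting that one copy changes either the max or the min.

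First, for $k \in \set{0,1,\dotsc,n}$, let $x^{(k)}$ be the input with $n-k$ copies of $1$ and $k$ copies of $M$. Write $\beta := \alpha_M/\alpha_1 > 0$. The histogram representation gives
\begin{equation*}
  z_k := \Att (\ip{q,k_{x^{(k)}_j}},v_{x^{(k)}_j})_{j=1}^n = v_1 + w_k (v_M - v_1),
  \qquad
  w_k := \frac{k\beta}{(n-k) + k\beta}.
\end{equation*}
Since $M \geq 2$, we have $1 \neq M$. Hence $x^{(0)}$ and $x^{(1)}$ have the same minimum but different maxima ($1$ versus $M$). Likewise $x^{(n-1)}$ and $x^{(n)}$ have different minima ($1$ versus $M$).

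Second, compute the two relevant gaps:
\begin{equation*}
  \norm{z_1 - z_0} = \frac{\beta}{n-1+\beta}\norm{v_M - v_1},
  \qquad
  \norm{z_n - z_{n-1}} = (1 - w_{n-1})\norm{v_M - v_1} = \frac{1}{1 + (n-1)\beta}\norm{v_M - v_1}.
\end{equation*}
Then split into two cases on $\beta$.

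\begin{itemize}
  \item If $\beta \leq 1$, the function $\beta/(n-1+\beta)$ is increasing in $\beta$, so the first coefficient is at most $1/n$. Taking the pair $(x^{(0)},x^{(1)})$ gives distance at most $D/n$ with differing maxima.
  \item If $\beta \geq 1$, the second coefficient is at most $1/n$. Taking the pair $(x^{(n-1)},x^{(n)})$ gives distance at most $D/n$ with differing minima.
\end{itemize}

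In both cases we use $\norm{v_M - v_1} \leq D$.

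I do not expect a real obstacle. The only point to get right is the choice of pair: an averaging argument over the $n$ steps along the segment would bound only some intermediate step, not the first or last. The case split on $\beta$ is exactly what guarantees that one of the two extreme steps is short.
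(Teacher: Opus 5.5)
Your proof is correct and is essentially the paper's argument: the paper fixes any pair $i<j$ and splits on whether $\alpha_i \le \alpha_j$, comparing $(i,j,\dotsc,j)$ with $(j,\dotsc,j)$ (differing minima) or $(i,\dotsc,i,j)$ with $(i,\dotsc,i)$ (differing maxima). This is exactly your case split on $\beta$ with $i=1$ and $j=M$; the $w_k$ segment parametrization only repackages the same two distance computations.
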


\begin{proof}
  Pick any $i, j \in [M]$ with $i < j$.
  Suppose $\alpha_i \leq \alpha_j$, where
  $\alpha_i := \exp\del{ \ip{q,k_i} }$ and
  $\alpha_j := \exp\del{ \ip{q,k_j} }$.
  Set
  \begin{align*}
    (x_1,\dotsc,x_n) & := (i,j,\dotsc,j) , \\
    (x_1',\dotsc,x_n') & := (j,j,\dotsc,j) , \\
    z & := \Att (\ip{q,k_{x_j}},v_{x_j})_{j=1}^n , \\
    z' & := \Att (\ip{q,k_{x_j'}},v_{x_j'})_{j=1}^n .
  \end{align*}
  So we have
  \begin{equation*}
    \min\set{x_1,\dotsc,x_n} \neq \min\set{x_1',\dotsc,x_n'} .
  \end{equation*}
  Moreover,
  \begin{align*}
    z - z'
    & = \frac{\alpha_i v_i + (n-1) \alpha_j v_j}{\alpha_i + (n-1) \alpha_j} - v_j \\
    & = \frac{\alpha_i}{\alpha_i + (n-1) \alpha_j} (v_i - v_j) \\
    & = \frac{1}{1 + (n-1) \alpha_j/\alpha_i} (v_i - v_j) .
  \end{align*}
  Since $\alpha_j / \alpha_i \geq 1$ and $\norm{v_i - v_j} \leq D$, we have
  \begin{equation*}
    \norm{z - z'}
    = \frac{1}{1 + (n-1) \alpha_j/\alpha_i} \norm{v_i - v_j}
    \leq \frac{1}{n} \norm{v_i - v_j} \leq \frac{D}{n} .
  \end{equation*}

  Now instead suppose $\alpha_i > \alpha_j$.
  Set
  \begin{align*}
    (x_1,\dotsc,x_n) & := (i,\dotsc,i,j) , \\
    (x_1',\dotsc,x_n') & := (i,\dotsc,i,i) , \\
    z & := \Att (\ip{q,k_{x_j}},v_{x_j})_{j=1}^n , \\
    z' & := \Att (\ip{q,k_{x_j'}},v_{x_j'})_{j=1}^n .
  \end{align*}
  So we have
  \begin{equation*}
    \max\set{x_1,\dotsc,x_n} \neq \max\set{x_1',\dotsc,x_n'} .
  \end{equation*}
  Moreover,
  \begin{align*}
    z - z'
    & = \frac{(n-1) \alpha_i v_i + \alpha_j v_j}{(n-1) \alpha_i + \alpha_j} - v_i \\
    & = \frac{\alpha_j}{(n-1) \alpha_i + \alpha_j} (v_j - v_i) \\
    & = \frac{1}{(n-1) \alpha_i/\alpha_j + 1} (v_j - v_i) .
  \end{align*}
  Since $\alpha_i / \alpha_j > 1$ and $\norm{v_j - v_i} \leq D$, we have
  \begin{equation*}
    \norm{z - z'}
    = \frac{1}{(n-1) \alpha_i/\alpha_j + 1} \norm{v_j - v_i}
    \leq \frac{1}{n} \norm{v_j - v_i} \leq \frac{D}{n} .
    \qedhere
  \end{equation*}
\end{proof}

\begin{proof}[Proof of \Cref{prop:lipschitz}]
  Fix $(x_1,\dotsc,x_n), (x_1',\dotsc,x_n') \in [M]^n$ with the properties guaranteed in \Cref{lem:close_outputs}, and let
  \begin{align*}
    z & := \Att (\ip{q,k_{x_j}},v_{x_j})_{j=1}^n , \\
    z' & := \Att (\ip{q,k_{x_j'}},v_{x_j'})_{j=1}^n .
  \end{align*}
  Suppose
  \begin{equation*}
    \min\set{x_1,\dotsc,x_n} \neq \min\set{x_1',\dotsc,x_n'} .
  \end{equation*}
  Then by the triangle inequality and the Lipschitz property of $g^{\min}$,
  \begin{align*}
    1 =
    \norm{
      e_{\min\set{x_1,\dotsc,x_n}}
      - e_{\min\set{x_1',\dotsc,x_n'}}
    }_\infty
    & \leq
    \norm{ e_{\min\set{x_1,\dotsc,x_n}} - g^{\min}(z) }_\infty
    \\
    & \qquad
    + \norm{ g^{\min}(z) - g^{\min}(z') }_\infty
    \\
    & \qquad
    + \norm{ g^{\min}(z') - e_{\min\set{x_1',\dotsc,x_n'}} }_\infty
    \\
    & \leq \epsilon + L \norm{z - z'} + \epsilon \\
    & \leq 2\epsilon + \frac{DL}{n} ,
  \end{align*}
  so re-arranging gives
  \begin{equation*}
    (1-2\epsilon)n \leq DL .
  \end{equation*}
  An analogous argument handles the case when $\max\set{x_1,\dotsc,x_n} \neq \max\set{x_1',\dotsc,x_n'}$.
\end{proof}

\end{document}